\def\eqref#1{Equation~\ref{#1}}
\def\1{\bm{1}}
\def\mvarepsilon{{\boldsymbol{\varepsilon}}}
\def\vu{{\bm{u}}}
\def\vw{{\bm{w}}}
\def\vx{{\bm{x}}}
\def\vy{{\bm{y}}}
\def\mA{{\bm{A}}}
\def\mH{{\bm{H}}}
\def\mI{{\bm{I}}}
\def\mS{{\bm{S}}}
\def\mU{{\bm{U}}}
\def\mX{{\bm{X}}}
\def\mSigma{{\bm{\Sigma}}}
\DeclareMathAlphabet{\mathsfit}{\encodingdefault}{\sfdefault}{m}{sl}
\SetMathAlphabet{\mathsfit}{bold}{\encodingdefault}{\sfdefault}{bx}{n}
\def\gD{{\mathcal{D}}}
\def\gF{{\mathcal{F}}}
\def\gG{{\mathcal{G}}}
\def\gL{{\mathcal{L}}}
\def\gN{{\mathcal{N}}}
\def\sR{{\mathbb{R}}}
\newcommand{\E}{\mathbb{E}}
\DeclareMathOperator*{\argmin}{arg\,min}
\newtheorem{assumption}{Assumption}[section]
\newtheorem{theorem}{Theorem}[section]
\newtheorem{corollary}[theorem]{Corollary}
\newtheorem{lemma}[theorem]{Lemma}
\newcommand{\tr}{\mathrm{tr}}
\newcommand{\Rnum}[1]{\uppercase\expandafter{\romannumeral #1\relax}}
\def\rone{\expandafter{\romannumeral1}}
\def\rtwo{\expandafter{\romannumeral2}}
\def\rthree{\expandafter{\romannumeral3}}
\definecolor{citecolor}{HTML}{0071BC}
\definecolor{linkcolor}{HTML}{ED1C24}
\definecolor{pku}{RGB}{139 0 18}
\newcommand{\wt}{\widetilde}
\newcommand{\wh}{\widehat}
\renewcommand{\eqref}[1]{(\ref{#1})}
\theoremstyle{plain}
\icmltitlerunning{A Statistical Theory of Regularization-Based Continual Learning}
\begin{document}

\twocolumn[
\icmltitle{A Statistical Theory of Regularization-Based Continual Learning}

\icmlsetsymbol{equal}{*}

\begin{icmlauthorlist}
\icmlauthor{Xuyang Zhao}{pku}
\icmlauthor{Huiyuan Wang}{upenn}
\icmlauthor{Weiran Huang \textsuperscript{\dag}}{sjtu,ailab}
\icmlauthor{Wei Lin \textsuperscript{\dag}}{pku}
\end{icmlauthorlist}

\icmlaffiliation{pku}{School of Mathematical Sciences and Center for Statistical Science, Peking University, Beijing, China}
\icmlaffiliation{upenn}{Department of Biostatistics, Epidemiology and Informatics, Perelman School of Medicine, University of Pennsylvania, Philadelphia, PA, USA}
\icmlaffiliation{sjtu}{MIFA Lab, Qing Yuan Research Institute, SEIEE, Shanghai Jiao Tong University, Shanghai, China}
\icmlaffiliation{ailab}{Shanghai AI Laboratory, Shanghai, China}

\icmlcorrespondingauthor{Weiran Huang}{weiran.huang@outlook.com}
\icmlcorrespondingauthor{Wei Lin}{weilin@math.pku.edu.cn}

\icmlkeywords{Catastrophic forgetting, continual learning, heterogeneity, regularization}

\vskip 0.3in
]

\printAffiliationsAndNotice{}  %
 
\begin{abstract}
We provide a statistical analysis of regularization-based continual learning on a sequence of linear regression tasks, with emphasis on how different regularization terms affect the model performance. We first derive the convergence rate for the oracle estimator obtained as if all data were available simultaneously. Next, we consider a family of generalized $\ell_2$-regularization algorithms indexed by matrix-valued hyperparameters, which includes the minimum norm estimator and continual ridge regression as special cases. As more tasks are introduced, we derive an iterative update formula for the estimation error of generalized $\ell_2$-regularized estimators, from which we determine the hyperparameters resulting in the optimal algorithm. Interestingly, the choice of hyperparameters can effectively balance the trade-off between forward and backward knowledge transfer and adjust for data heterogeneity. Moreover, the estimation error of the optimal algorithm is derived explicitly, which is of the same order as that of the oracle estimator. In contrast, our lower bounds for the minimum norm estimator and continual ridge regression show their suboptimality. A byproduct of our theoretical analysis is the equivalence between early stopping and generalized $\ell_2$-regularization in continual learning, which may be of independent interest. Finally, we conduct experiments to complement our theory.
\end{abstract}

\section{Introduction}
Continual learning (CL) in machine learning involves training a model continuously across multiple tasks, constrained by limited memory. As more tasks are introduced and additional data samples are collected, it is expected that the model will exhibit enhanced performance on both old and new tasks. 
However, due to memory limits, not all past data can be retained; typically, only a subset of the data or summary statistics are stored. This makes continual learning more challenging than single-task learning, as it prohibits the simple pooling of all samples \citep{parisi2019continual}.
Alternatively, without using exceedingly large long-term memory, we can view continual learning as an online multi-task problem where a model is sequentially fitted to data provided for each task. However, such an approach may result in poor performance of the current model on previous tasks, a phenomenon known as \textit{catastrophic forgetting} \citep{mccloskey1989catastrophic, goodfellow2013empirical}. Clearly, forgetting information from earlier tasks undermines the overall effectiveness of the model.

There are two goals of continual learning algorithms. One is the \textit{forward knowledge transfer}, which focuses on transferring knowledge from previous tasks to make learning on new tasks simpler.
The other is the \textit{backward knowledge transfer} \cite{lin2023theory}, which aims to address the issue of catastrophic forgetting when learning new tasks and keep the overall performance improving over time.
From a statistical perspective, the main difficulty in these two goals is \textit{heterogeneity} among tasks, i.e., the data distribution can vary across different tasks.
In the presence of heterogeneity, the forward and backward knowledge transfer can contradict each other, between which a trade-off will arise \cite{lin2023theory,wang2023comprehensive}. An ideal CL algorithm should properly balance the knowledge extracted from old tasks and the information contained in new samples to achieve both forward and backward knowledge transfer.

To resolve the conflict, many algorithms have been proposed recently. Roughly speaking, these algorithms fall into three categories: regularization-based methods \citep{kirkpatrick2017overcoming, aljundi2018memory, liu2022continual}, replay-based methods \citep{chaudhry2018efficient,riemer2018learning,jin2021gradient}, and expansion-based methods \citep{serra2018overcoming, yoon2019scalable,yang2021grown}.
The common intuition underlying these algorithms is applying different techniques that can use old information to constrain the model's change on new tasks, thereby achieving forward and backward knowledge transfer simultaneously.
However, the theoretical understanding of CL algorithms is still underdeveloped. In particular, none of the existing work shows an explicit trade-off between forward and backward knowledge transfer, let alone offering a guidance on how to balance them properly. Also, the roles of heterogeneity and noise are not fully discussed, which are crucial aspects of practical continual learning.

In this paper, we enrich the existing literature by establishing theoretical properties of regularization-based continual learning algorithms within the linear regression framework. Our analysis includes considerations for heterogeneity, noise, and overparametrization, and offers an in-depth investigation of the trade-off between forward and backward knowledge transfer.
Specifically, our contributions are summarized as follows.
\begin{itemize}
\item We provide lower bounds for two continual learning algorithms, i.e., the minimum norm estimator \citep{lin2023theory} and continual ridge regression \citep{li2023fixed}.
These bounds reveal their suboptimality compared to the oracle estimator, which motivates us to study some new algorithms.

\item We point out two main reasons for the failure of the above two methods: forward--backward trade-off and information heterogeneity.
The former is essentially the trade-off between the information carried in old tasks and that in the new task, and the latter means that the knowledge carried in different tasks varies.

\item  Inspired by our findings, we propose a generalized $\ell_2$-regularized estimator.
By choosing its hyperparameters properly to deal with the forward--backward trade-off and information heterogeneity, we show that our estimator attains the error rate of the oracle estimator and hence avoids catastrophic forgetting.

\item We establish the relationship between early stopping between $\ell_2$-regularization in continual linear regression.
We show that, if the learning rate of gradient descent takes a more general form as in our generalized $\ell_2$-regularization, then these two methods are actually equivalent.
This can be viewed as an extension of similar results shown for learning a single task.

\item We conduct simulation experiments to complement our theory. We obtain a practical algorithm based on the above theoretical results, which has a close connection with elastic weighted consolidation (EWC). We illustrate its performance through simulations.
\end{itemize}
\subsection{Related Work}
\paragraph{Continual learning algorithms.} Over the past several years, continual learning has attracted considerable attention, leading to the proposal of numerous empirical algorithms aimed at mitigating catastrophic forgetting. Broadly speaking, these methods can be categorized into three groups: (1) regularization-based methods \citep{kirkpatrick2017overcoming, aljundi2018memory, liu2022continual}, which regularize modifications to the importance weights for old tasks when learning the new task; (2) expansion-based methods \citep{serra2018overcoming, yoon2019scalable,yang2021grown}, which learn a mask to fix the importance weights for old tasks during the new task learning and further expand the neural network when needed; (3) memory-based methods, which either store and replay the data from old tasks when learning the new task, i.e., experience-replay based methods \citep{chaudhry2018efficient,riemer2018learning,jin2021gradient}, or store the gradient information from old tasks and learn the new task in the direction orthogonal to old tasks, i.e., orthogonal-projection based methods \citep{farajtabar2020orthogonal,saha2021gradient,lin2022trgp}.

\paragraph{Theoretical studies in CL.} 
\citet{mccloskey1989catastrophic} proposed a unified framework for the performance analysis of regularization-based CL methods, by formulating them as a second-order Taylor approximation of the loss function for each task.
\citet{bennani2020generalisation} and \citet{doan2021theoretical} analyzed generalization error and forgetting for the orthogonal gradient descent (OGD) approach \cite{yin2020optimization} based on NTK models, and further proposed variants of OGD to address forgetting. \citet{lee2021continual} and \citet{asanuma2021statistical} studied CL in the teacher--student setup to characterize the impact of task similarity on forgetting performance. \citet{cao2022provable} and \citet{li2022provable} investigated continual representation learning with dynamically expanding feature spaces, and developed provably efficient CL methods with a characterization of the sample complexity.

Besides, there are some theoretical works on regularization-based methods.
\citet{evron2022catastrophic} studied the minimum norm estimator in CL under an overparameterized and noise-free setup. \citet{li2023fixed} gave a fixed design analysis of continual ridge regression for two-task linear regression.

\citet{chen2022memory} characterized the lower memory bound in CL using the PAC framework. \citet{andle2022theoretical} analyzed the selection of frozen filters based on layer sensitivity to maximize the performance of CL. \citet{wen2024provable} studied the contrastive CL methods and provided upper and lower performance bounds.
\citet{yang2022optimizing} presented a CL algorithm based on supervised PCA and gave a theoretical analysis.
\citet{denevi2019learning} proposed to add a bias term to SGD and showed improved performance theoretically.

\section{Continual Linear Regression}\label{sec:setup}

\paragraph{Data.}
We consider a standard continual learning problem where a sequence of tasks indexed by $t = 1,\dots, T$ arrives sequentially.
Suppose that each task $t$ holds a dataset $\gD_t = \{(\vx_i^{(t)}, y_i^{(t)})\in\sR^{p}\times \sR\}_{i=1}^{n_t}$, where $n_t$ denotes its sample size. 
We assume that all of the $T$ tasks are generated by a linear model with the same regression coefficient, i.e., for all $t\in[T]$ and $i\in[n_t]$, 
\begin{equation}\label{eq:data}
y_i^{(t)} = (\vx_i^{(t)})^\top \vw_* + \varepsilon_i^{(t)},
\end{equation}
where $\vw_*\in\sR^p$ is the true parameter and $\varepsilon_i^{(t)}$ are independent random noises with variance $\sigma^2$. 
By stacking the data as $\mX_t:=(\vx_1^{(t)}, \dots,\vx_{n_t}^{(t)})^\top\in\sR^{n_t\times p}$, $\vy_t:=(y_{1}^{(t)},\dots,y_{n_t}^{(t)})\in\sR^{n_t}$, and $\boldsymbol{\varepsilon}_t:=(\varepsilon_{1}^{(t)},\dots,\varepsilon_{n_t}^{(t)})\in\sR^{n_t}$, we can rewrite \eqref{eq:data} as
$$
\vy_t = \mX_t \vw_* + \boldsymbol{\varepsilon}_t.
$$
We define $\boldsymbol{\Sigma}_t := \mX_t^\top \mX / n_t \in \sR^{p\times p}$ as the covariance matrix for task $t$. 
Note that we do not require $n_t>p$, i.e., we allow for overparametrization in any single task.

\paragraph{Evaluation metric.}
Our goal is to estimate $\vw_*$ in the continual learning setting.
For any estimator $\wh{\vw}$, we use $\gL(\wh{\vw}):= E \|\wh{\vw}-\vw_*\|^2$ to denote its estimation error.
Note that the definition of $\gL$ applies to each task, since they share a common true parameter $\vw_*$. 
Based on $\gL$, two key metrics, forgetting and generalization error, can be defined respectively as
$$
\begin{aligned}
&\gF_t := \frac{1}{t-1}\sum_{\tau=1}^{t-1} (\gL(\wh{\vw}_t) - \gL(\wh{\vw}_\tau)),\\
&\gG_t := \frac{1}{t}\sum_{\tau=1}^t \gL(\wh{\vw}_t) = \gL(\wh{\vw}_t ),
\end{aligned}
$$
for each $t\in[T]$, 
where $\wh{\vw}_\tau$ denotes the output of a continual learning algorithm after the arrival of task $\tau$.
Small $\gF_t$ means that the estimator learned after task $t$ still has good performance on previous tasks. If $\gF_t < 0$ for every $t\in[T]$, the continual learning algorithm achieves consistently increasing performance and avoids catastrophic forgetting.

\paragraph{Oracle estimator.}
Without the constraint of continual learning, i.e., data of all tasks are available simultaneously, we can estimate $\vw_*$ by simply pooling all samples together and solving the offline optimization problem
$$
\min_{\vw}\left\{ \sum_{t=1}^T \|\mX_t \vw - \vy_t\|^2\right\}.
$$
We call its solution the oracle estimator (ORA) and denote it by 
\begin{equation}\label{eq:ora}
\wh{\vw}_T^{\mathrm{(ORA)}} := \argmin_{\vw} \left\{\sum_{t=1}^T \|\mX_t \vw - \vy_t\|^2\right\}.
\end{equation}
The oracle estimator cannot be used in continual learning practice since it requires simultaneous availability of all data.
Nevertheless, it serves as an ideal baseline to gauge the accuracy of estimating $\vw_*$ without continual learning constraint.
If a continual learning algorithm exhibits comparable performance to the oracle estimator, then we can assert the superiority of that algorithm.

\section{Learning Algorithms}\label{sec:lower}
In this paper, our primary objective is to investigate the \textit{generalized $\ell_2$-regularization} algorithm (GR), which is a family of regularization-based continual learning algorithms. Specifically, it sequentially produces an estimate of $w_*$ as depicted in Algorithm \ref{alg:example}, where $\{\mH_t\}_{t=1}^T$ are user-specified regularization weight matrices and $\|\vw-\wh{\vw}_{t-1}^{(\mathrm{GR})}\|_{\mH_{t}}^2 := (\vw-\wh{\vw}_{t-1}^{(\mathrm{GR})})^\top {\mH_{t}} (\vw-\wh{\vw}_{t-1}^{(\mathrm{GR})})$.

\begin{algorithm}[ht]
\caption{Generalized $\ell_2$-regularization method}
\label{alg:example}
\begin{algorithmic}
\STATE {\bfseries Initialization:} $\wh{\vw}_0^{(\mathrm{GR})} = 0$
\STATE {\bfseries Iterative update for each task $t\in[T]$:}
\begin{equation}\label{eq:gr}
\begin{aligned}
\wh{\vw}_t^{(\mathrm{GR})} :=& \argmin_{\vw} \biggl\{\frac{1}{n}\|\mX_t \vw - \vy_t\|^2\\
&\phantom{\argmin_{\vw} \biggl\{}\quad{}+ \|\vw-\wh{\vw}_{t-1}^{(\mathrm{GR})}\|_{\mH_{t}}^2\biggr\}
\end{aligned}
\end{equation}
\end{algorithmic}
\end{algorithm}

The choice of $\{\mH_t\}_{t=1}^T$ determines how we navigate the balance between forward and backward knowledge transfer. With different choices of $\{\mH_t\}_{t=1}^T$, the GR algorithm encompasses several commonly studied algorithms as special cases. For example, when $\mH_t=\lambda_t \mI_p$ for some $\lambda_t>0$, GR becomes the conventional continual ridge regression algorithm \cite{li2023fixed}.
In the overparameterized scenario where $p>n_t$, if $\mH_t\to 0$, then GR is equivalent to the minimum norm estimator \cite{evron2022catastrophic,lin2023theory}.

In the rest of this section, we give an in-depth discussion of these two algorithms.

\subsection{Minimum Norm Estimator}
Let $\gamma_j^{(t)}$ be the $j$th eigenvalue of $\boldsymbol{\Sigma}_t$.
If $|\{j:\gamma_j^{(t)} > 0\}| = n_t < p$, then there always exists some $\vw$ that interpolates the training data of task $t$, i.e., $\boldsymbol{X}_t \vw = \boldsymbol{y}_t$. 
In this overparameterized regime, some recent works \cite{evron2022catastrophic,lin2023theory} studied the the minimum norm estimator (MN), which is defined in Algorithm \ref{alg:mn}.

\begin{algorithm}[htbp]
\caption{Minimum norm estimator}
\label{alg:mn}
\begin{algorithmic}
\STATE {\bfseries Initialization:} $\wh{\vw}_0^{\mathrm{(MN)}} = 0$
\STATE {\bfseries Iterative update for each task $t\in[T]$:}
$$
\begin{aligned}
\wh{\vw}_t^{\mathrm{(MN)}} = \argmin_{\vw} &\biggl\{\|\vw-\wh{\vw}_{t-1}^{\mathrm{(MN)}}\|^2\\
&\phantom{\biggl\{}\quad\text{ s.t. }\boldsymbol{X}_t \vw = \boldsymbol{y}_t,\biggr\}
\end{aligned}
$$
\end{algorithmic}
\end{algorithm}

Compared to $\ell_2$-regularization methods, MN can be regarded as the limit of the $\ell_2$-regularized estimator when the penalty strength tends to $0$.
From this perspective, it might overly prioritize the data from the new task and underestimate the knowledge embedded in old tasks.
Given that $\boldsymbol{y}_t = \boldsymbol{X}_t \vw_* + \boldsymbol{\varepsilon}_t \not=\boldsymbol{X}_t \vw_*$, imposing the condition $\boldsymbol{X}_t \vw = \boldsymbol{y}_t$ on the estimators inevitably introduces the noise term, which in reality dominates the information when $p>n$.

Specifically, the following theorem provides a lower bound showing that the estimation error of the MN estimator cannot converge to $0$.

\begin{theorem}[Lower bound for the minimum norm estimator]\label{thm:min-norm}
Suppose that $\boldsymbol{\Sigma}_t$ satisfies $|\{j:\gamma_j^{(t)}>0\}| = n_t < p$. Then we have
$$
\gL(\wh{\vw}_t^{\mathrm{(MN)}}) \geq \frac{\sigma^2}{\max_{j\in[p]} \gamma_j^{(t)}}.
$$
\end{theorem}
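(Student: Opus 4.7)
The plan is to write the MN update in closed form, decompose the error into two orthogonal pieces, and then lower bound the estimation risk by the noise contribution alone.

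First, I would solve the constrained optimization defining $\wh{\vw}_t^{(\mathrm{MN})}$ in closed form. Since $\mX_t$ has rank $n_t$, the pseudoinverse $\mX_t^+ = \mX_t^\top(\mX_t \mX_t^\top)^{-1}$ is well defined, and the minimizer of $\|\vw - \wh{\vw}_{t-1}^{(\mathrm{MN})}\|^2$ subject to $\mX_t \vw = \vy_t$ is
$$
\wh{\vw}_t^{(\mathrm{MN})} = (\mI_p - \mP_t)\wh{\vw}_{t-1}^{(\mathrm{MN})} + \mX_t^+ \vy_t,
$$
where $\mP_t := \mX_t^+ \mX_t$ is the orthogonal projector onto the row space of $\mX_t$. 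Substituting $\vy_t = \mX_t \vw_* + \boldsymbol{\varepsilon}_t$ and using $\mX_t^+ \mX_t \vw_* = \mP_t \vw_*$, I can rewrite the error as
$$
\wh{\vw}_t^{(\mathrm{MN})} - \vw_* = (\mI_p - \mP_t)(\wh{\vw}_{t-1}^{(\mathrm{MN})} - \vw_*) + \mX_t^+ \boldsymbol{\varepsilon}_t.
$$

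Second, I would exploit the geometric orthogonality of the two summands: $(\mI_p - \mP_t)\vu$ lies in the null space of $\mX_t$, while $\mX_t^+ \vv$ always lies in its row space. Hence, deterministically (not merely in expectation), the two vectors are orthogonal, giving the Pythagorean identity
$$
\|\wh{\vw}_t^{(\mathrm{MN})} - \vw_*\|^2 = \|(\mI_p - \mP_t)(\wh{\vw}_{t-1}^{(\mathrm{MN})} - \vw_*)\|^2 + \|\mX_t^+ \boldsymbol{\varepsilon}_t\|^2.
$$
Taking expectations and dropping the nonnegative ``bias'' term yields $\gL(\wh{\vw}_t^{(\mathrm{MN})}) \geq \E\|\mX_t^+ \boldsymbol{\varepsilon}_t\|^2$.

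Finally, I would compute the noise term: $\E\|\mX_t^+ \boldsymbol{\varepsilon}_t\|^2 = \sigma^2 \tr\bigl((\mX_t \mX_t^\top)^{-1}\bigr)$. The nonzero eigenvalues of $\mX_t \mX_t^\top$ coincide with those of $\mX_t^\top \mX_t = n_t \boldsymbol{\Sigma}_t$, i.e.\ the $n_t$ values $n_t \gamma_j^{(t)}$ for $j$ with $\gamma_j^{(t)} > 0$. Thus
$$
\tr\bigl((\mX_t \mX_t^\top)^{-1}\bigr) = \sum_{j:\gamma_j^{(t)}>0} \frac{1}{n_t\, \gamma_j^{(t)}} \geq \frac{n_t}{n_t \max_{j} \gamma_j^{(t)}} = \frac{1}{\max_{j\in[p]}\gamma_j^{(t)}},
$$
which combined with the previous inequality delivers the claim.

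\textbf{Main obstacle.} There is little technical difficulty: the entire argument rests on recognizing the orthogonal decomposition of the error into a null-space bias term and a row-space noise term. The only minor subtlety to flag is that, because this orthogonality holds pointwise (not merely in expectation), one does not need to invoke independence between $\wh{\vw}_{t-1}^{(\mathrm{MN})}$ and $\boldsymbol{\varepsilon}_t$ to kill the cross term; this keeps the argument clean and does not require any distributional assumption on $\mX_t$ beyond full row rank, which is guaranteed by the hypothesis $|\{j:\gamma_j^{(t)}>0\}| = n_t$.
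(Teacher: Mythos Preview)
Your proof is correct, but it takes a different route from the paper's. The paper's argument is shorter and more direct: it simply uses the interpolation identity $\mX_t(\wh{\vw}_t^{(\mathrm{MN})} - \vw_*) = \boldsymbol{\varepsilon}_t$ to obtain
$$
(\wh{\vw}_t^{(\mathrm{MN})} - \vw_*)^\top \boldsymbol{\Sigma}_t (\wh{\vw}_t^{(\mathrm{MN})} - \vw_*) = \tfrac{1}{n_t}\|\boldsymbol{\varepsilon}_t\|^2,
$$
takes expectations to get $\sigma^2$ on the right, and then bounds the left side above by $\max_j \gamma_j^{(t)}\cdot \E\|\wh{\vw}_t^{(\mathrm{MN})} - \vw_*\|^2$ via the largest-eigenvalue inequality. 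No pseudoinverse, no decomposition, and no trace computation are needed. Your approach, by contrast, explicitly isolates the bias--variance structure of the error (null-space bias versus row-space noise) and lower bounds by the variance term alone. This is longer but more informative: it makes transparent that the lower bound stems entirely from the noise contribution and would persist even if the bias term vanished (e.g., if $\wh{\vw}_{t-1}^{(\mathrm{MN})} = \vw_*$), reinforcing the forward--backward trade-off interpretation the paper emphasizes. Both arguments use only the full-row-rank hypothesis and the second-moment assumption on the noise.
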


From Theorem \ref{thm:min-norm} we see that the estimation error of the minimum norm estimator is lower bounded by a term independent of the old tasks.

Consequently, even if the old tasks provide sufficient samples for an accurate estimate of $\vw_*$ or the number of tasks increases infinitely, 
the estimation error of the MN estimator is always lower bounded by a constant that is not approaching $0$.
Indeed, irrespective of the accuracy of $\wh{\vw}_{t-1}$, even if it precisely matches $\vw_*$, the MN estimator cannot leverage it to obtain a better estimate. This is because the estimator attempts to interpolate the newly encountered deficient data and hence does not balance the trade-off between old and new tasks, which we refer to as the \emph{forward--backward trade-off}. 
As a result, the MN estimator is highly susceptible to catastrophic forgetting.

\subsection{Continual Ridge Regression}

Continual ridge regression (CRR) \cite{li2023fixed} uses ridge regularization to constrain the parameter's change when fitting new tasks. Specifically, it updates the estimate using the iterations defined in Algorithm \ref{alg:CRR}.

\begin{algorithm}[htbp]
\caption{Continual ridge regression}
\label{alg:CRR}
\begin{algorithmic}
\STATE {\bfseries Initialization:} $\wh{\vw}_0^{\text{(CRR)}} = 0$
\STATE {\bfseries Iterative update for each task $t\in[T]$:}
$$
\begin{aligned} \wh{\vw}_t^{\text{(CRR)}} = \argmin_{\vw}\biggl\{&\frac{1}{n}\|\boldsymbol{X}_t \vw - \boldsymbol{y}_t\|^2\\
&\quad{}+ \lambda_t \|\vw - \wh{\vw}_{t-1}\|^2\biggr\}
\end{aligned}
$$
\end{algorithmic}
\end{algorithm}

Clearly, CRR is a special case of our generalized $\ell_2$-regularized estimator, which uses the conventional ridge penalty by setting $\lambda_t^{(1)} = \dots = \lambda_t^{(p)} = \lambda_t$.
CRR treats each coordinate of $\vw_*$ equally, i.e., it potentially assumes that $|(\wh{\vw}_t^{(\mathrm{CRR})})_j-(\vw_*)_j|^2$ are the same for different $j$.

However, such homogeneity does not always exist in continual learning setting since the information introduced by different tasks can vary across various directions of $\vw_*$, especially in the scenario where the data distributions differ across tasks.
For example, there may exist some $i\not=j$ such that $|(\wh{\vw}_t^{(\mathrm{CRR})})_j-(\vw_*)_j|^2 = o(1)$ while $|(\wh{\vw}_t^{(\mathrm{CRR})})_i-(\vw_*)_i|^2 = O(1)$ if previous tasks contain very little information about $(\vw_*)_i$. In this case, the suitable values for $\lambda_i$ and $\lambda_j$ might differ. Consequently, the CRR estimator, which cannot address such \textit{information heterogeneity}, may be suboptimal.

More specifically, we have the following lower bound for the CRR estimator, which shows that its worst-case performance is much worse than that of GR.

\begin{theorem}[Lower bound for continual ridge regression]\label{thm:ridge}
Consider a two-task and two-dimensional
continual learning problem with covariance matrices $\boldsymbol{\Sigma}_1 = \mathrm{diag}(1, \epsilon)$ and $\boldsymbol{\Sigma}_2 = \mathrm{diag}(\epsilon, 1)$ and sample sizes $n_1$ and $n_2$. Then we have
$$
\sup _{n_1, n_2, \epsilon} \inf _\lambda \frac{\mathcal{L}\left(\widehat{\boldsymbol{w}}_2^{(\mathrm{CRR})}\right)}{\mathcal{L}\left(\widehat{\boldsymbol{w}}_2^{(\mathrm{GR})}\right)}= +\infty,
$$
where $\lambda$ is the regularization hyperparameter of CRR.
\end{theorem}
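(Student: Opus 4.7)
I will exhibit an adversarial family of instances on which the scalar-regularized CRR must pay either a large variance or a large bias, whereas GR with matrix-valued regularizers attains the oracle rate. Fix a nonzero target $\vw_* = (W,W)^\top$ with $W>0$ and work in fixed design, so that $\rmX_t^\top \rmX_t / n_t = \boldsymbol{\Sigma}_t$; take CRR to use a single scalar $\lambda_1 = \lambda_2 = \lambda$. The diagonal structure of $\boldsymbol{\Sigma}_1, \boldsymbol{\Sigma}_2$ decouples both algorithms coordinate-wise into the scalar recursion $\wh{w}_t^{(j)} = \big((\boldsymbol{\Sigma}_t)_{jj}(\vw_*)_j + \xi_t^{(j)} + \lambda_t\, \wh{w}_{t-1}^{(j)}\big) / \big((\boldsymbol{\Sigma}_t)_{jj} + \lambda_t\big)$, where $\xi_t^{(j)} := (\rmX_t^\top \boldsymbol{\varepsilon}_t / n_t)_j$ has mean $0$ and variance $\sigma^2 (\boldsymbol{\Sigma}_t)_{jj}/n_t$. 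Unrolling two steps and carrying out the bias--variance decomposition of $\gL(\wh{\vw}_2^{(\mathrm{CRR})}) = \sum_j \E[\wh{w}_2^{(j)} - (\vw_*)_j]^2$, I would isolate two non-negative summands pulling in opposite directions in $\lambda$:
\[
V_1(\lambda) := \frac{\sigma^2 \epsilon / n_2}{(\epsilon + \lambda)^2}, \qquad B_2(\lambda) := \frac{\lambda^4 W^2}{(1+\lambda)^2 (\epsilon+\lambda)^2},
\]
namely the part of $\Var(\wh{w}_2^{(1)})$ due to fresh task-$2$ noise on the low-information coordinate, and the squared bias of $\wh{w}_2^{(2)}$ propagated from the task-$1$ shrinkage toward $\wh{\vw}_0 = \vzero$.

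Since both summands are non-negative, $\gL(\wh{\vw}_2^{(\mathrm{CRR})}) \ge \max\{V_1(\lambda), B_2(\lambda)\}$. On $(0,1]$, $V_1$ is strictly decreasing in $\lambda$ while $B_2$ is strictly increasing, so the infimum of their maximum is attained at the unique crossing $\lambda_\star$ solving $\lambda_\star^4 W^2 / (1+\lambda_\star)^2 = \sigma^2 \epsilon / n_2$. In the regime $\lambda_\star \in (\epsilon, 1)$ this gives $\lambda_\star^2 \asymp \sigma\sqrt{\epsilon/n_2}/W$, and substituting back yields
\[
\inf_\lambda \gL(\wh{\vw}_2^{(\mathrm{CRR})}) \;\gtrsim\; V_1(\lambda_\star) \;\asymp\; W \sigma \sqrt{\epsilon/n_2}.
\]
For GR, choosing diagonal $\mH_1 = \mathrm{diag}(0, h_1^{(2)})$ and $\mH_2 = \mathrm{diag}(n_1/n_2, h_2^{(2)})$ with the coord-$2$ entries tuned separately from the coord-$1$ entries --- exactly the degree of freedom CRR lacks --- a direct coordinate-wise computation of the same recursion (now with per-coordinate regularizations) gives $\gL(\wh{\vw}_2^{(\mathrm{GR})}) \le \sigma^2/(n_1 + n_2 \epsilon) + \sigma^2 / (n_1 \epsilon + n_2) \lesssim \sigma^2 / \min(n_1,n_2)$, matching the oracle rate.

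Combining the two bounds and specializing to $n_1 = n_2 = n$, the ratio satisfies $\gL(\wh{\vw}_2^{(\mathrm{CRR})}) / \gL(\wh{\vw}_2^{(\mathrm{GR})}) \gtrsim W \sqrt{n \epsilon} / \sigma$; picking the sequence $\epsilon_n = n^{-1/2}$ (so $n \epsilon_n = \sqrt{n} \to \infty$ while $\epsilon_n \to 0$) makes the right-hand side diverge, which proves $\sup_{n_1,n_2,\epsilon}\inf_\lambda (\cdot) = +\infty$. The delicate point is the bias--variance bookkeeping: singling out the specific pair $(V_1, B_2)$ from the fully expanded MSE and handling the $(1+\lambda)$ and $(\epsilon+\lambda)$ denominators carefully enough that the crossing value $V_1(\lambda_\star) = B_2(\lambda_\star)$ evaluates cleanly to the claimed scale $W\sigma\sqrt{\epsilon/n_2}$. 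A secondary but necessary check is that $\lambda_\star$ lies in the window $(\epsilon, 1)$ along the chosen instance family, which is straightforward for $\epsilon_n = n^{-1/2}$.
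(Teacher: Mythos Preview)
Your argument is correct but proceeds along a genuinely different route from the paper's.

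The paper derives the full coordinate-wise MSEs of $\wh{\vw}_2^{(\mathrm{CRR})}$ as functions of $\lambda_1,\lambda_2$, then argues by contradiction: for each coordinate $j$ it works out the interval of $\lambda_2$ values that would keep the ratio $\E[(\wh w_{2,j}-w_{*,j})^2]/\E[(\wh w_{2,j}^{(\mathrm{GR})}-w_{*,j})^2]$ bounded, and exhibits the instance family $n_1\asymp n^2$, $n_2\asymp n^3$, $\epsilon\asymp n^{-1/2}$ on which these two intervals are disjoint. Hence no admissible $\lambda_2$ exists, forcing the sup--inf to be $+\infty$.

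Your approach is instead a direct lower bound: isolate the two competing summands $V_1(\lambda)$ and $B_2(\lambda)$, use their opposite monotonicity to reduce $\inf_\lambda$ to the crossing $\lambda_\star$, and evaluate. This yields an explicit quantitative rate (the ratio is $\gtrsim n^{1/4}$ along $n_1=n_2=n$, $\epsilon=n^{-1/2}$), which the paper's contradiction argument does not provide. Your instance family is also simpler.

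One caveat worth flagging: you tie $\lambda_1=\lambda_2=\lambda$, matching the theorem's wording of a single hyperparameter, whereas the paper's proof keeps them separate and takes the infimum over $\lambda_2$. If one wanted the stronger statement with a free $\lambda_1$, your bias term $B_2$ could be driven to zero by sending $\lambda_1\to 0$, and you would need to trade it for the propagated task-1 variance term $\tfrac{\lambda_2^2}{(1+\lambda_2)^2}\cdot\tfrac{\sigma^2\epsilon/n_1}{(\epsilon+\lambda_1)^2}$, which penalizes small $\lambda_1$. The crossing idea would still go through with three terms, but the bookkeeping is heavier.
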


\section{Generalized $\ell_2$-Regularization Attains Oracle Rate}\label{sec:main}
In this section, we provide a theoretical analysis of the generalized $\ell_2$-regularized estimator (GR) defined in \eqref{eq:gr}. 
Our theory shows that, through the proper selection of the regularization weight matrix $\mH_t$, it is possible to avoid catastrophic forgetting, and the resulting estimation error can even be comparable with that of the oracle estimator.

Before establishing our main results, we first present some assumptions for our analysis.

\subsection{Assumptions}
\begin{assumption}[Fixed design]\label{assump:fixed_design}
The features $\{\mX_t\}_{t=1}^T$ are fixed while the noises $\boldsymbol{\varepsilon}_t$ are random with mean $0$ and variance $\sigma^2>0$.
\end{assumption}

\begin{assumption}[Commutable covariance matrices]\label{assump:commutable}
The set of covariance matrices $\{\boldsymbol{\Sigma}_t\}_{t=1}^T$ are commutable.
\end{assumption}
These two assumptions ensure that the GR estimator have explicit solutions, which helps to deliver our messages concisely. Similar assumptions are commonly made in related literature \citep{lei2021near,wu2022power,li2023fixed}.
In Section \ref{sec:extension}, we will show that without these assumptions, similar results still hold.

By simple linear algebra, Assumption \ref{assump:commutable} is equivalent to the fact that $\{\boldsymbol{\Sigma}_t\}_{t=1}^T$ are simultaneously diagonalizable. 
Therefore, there exists a single orthogonal matrix $\boldsymbol{U}\in\sR^{p\times p}$ such that $\boldsymbol{\Sigma}_t = \boldsymbol{U} \boldsymbol{ \Gamma}_t \boldsymbol{U}^\top$, where $\boldsymbol{\Gamma}_t = \mathrm{diag}\{\gamma_1^{(t)},\dots,\gamma_p^{(t)}\}$ denotes the diagonal matrix consisting of the eigenvalues of $\boldsymbol{\Sigma}_t$.
In this case, the heterogeneity among different tasks is solely encoded by the different eigenvalues in $\boldsymbol{\Gamma}_t$. 

\begin{assumption}[Sufficient sample size]\label{assump:fullrank}
For each $j\in[p]$,
$\sum_{t=1}^T \gamma_j^{(t)}>0.$
\end{assumption}
This assumption is imposed to simplify the analysis of $\wh{\vw}^{(\mathrm{ORA})}$. 
Under this assumption, when the data of all $T$ tasks are pooled together, there is no overparameterization, i.e., $\sum_{t=1}^T \boldsymbol{\Sigma}_t$ has full rank. 
Therefore, the oracle estimator $\wh{\vw}^{\mathrm{(ORA)}}$ defined by \eqref{eq:ora} has a unique solution, whose estimation error can be calculated directly.

Indeed, the following lemma gives an explicit expression for the estimation error of the oracle estimator. 
\begin{lemma}[Estimation error of the oracle estimator]\label{lem:ora}
Suppose that Assumptions \ref{assump:fixed_design}--\ref{assump:fullrank} hold.
Then the estimator error of the oracle estimator is 
$$
\begin{aligned}
\gL(\wh{\vw}_T^{\mathrm{(ORA)}}) &= \sum_{j=1}^p \frac{\sigma^2}{\gamma_j^{(1)}n_1 + \dots + \gamma_j^{(t)}n_T}.
\end{aligned}
$$
\end{lemma}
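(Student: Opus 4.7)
The plan is to proceed by directly solving the normal equations for the oracle estimator, substituting the data model, and computing the mean squared error using the simultaneous diagonalization afforded by Assumption \ref{assump:commutable}.

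First, I would write down the first-order optimality condition for the unconstrained quadratic problem \eqref{eq:ora}, which yields the normal equations
\[
\Bigl(\sum_{t=1}^T \mX_t^\top \mX_t\Bigr) \wh{\vw}_T^{\mathrm{(ORA)}} = \sum_{t=1}^T \mX_t^\top \vy_t.
\]
Using $\mX_t^\top \mX_t = n_t \boldsymbol{\Sigma}_t$, Assumption \ref{assump:fullrank} combined with Assumption \ref{assump:commutable} guarantees that $\sum_{t=1}^T n_t \boldsymbol{\Sigma}_t$ is invertible (its eigenvalues are exactly $\sum_{t=1}^T n_t \gamma_j^{(t)} > 0$). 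Hence the oracle estimator is given in closed form by $\wh{\vw}_T^{\mathrm{(ORA)}} = (\sum_{t} n_t \boldsymbol{\Sigma}_t)^{-1} \sum_{t} \mX_t^\top \vy_t$.

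Next, substituting $\vy_t = \mX_t \vw_* + \boldsymbol{\varepsilon}_t$ and using the normal equations applied to $\vw_*$, I would get the affine representation
\[
\wh{\vw}_T^{\mathrm{(ORA)}} - \vw_* = \mA \sum_{t=1}^T \mX_t^\top \boldsymbol{\varepsilon}_t, \qquad \mA := \Bigl(\sum_{t=1}^T n_t \boldsymbol{\Sigma}_t\Bigr)^{-1}.
\]
Taking expectation of the squared norm, exploiting independence of the noises across (and within) tasks from Assumption \ref{assump:fixed_design}, and using $E[\boldsymbol{\varepsilon}_t \boldsymbol{\varepsilon}_t^\top] = \sigma^2 \mI_{n_t}$, the cross terms vanish and one obtains
\[
\gL(\wh{\vw}_T^{\mathrm{(ORA)}}) = \sigma^2 \Tr\Bigl(\mA \sum_{t=1}^T \mX_t^\top \mX_t\, \mA\Bigr) = \sigma^2 \Tr(\mA),
\]
since $\sum_t \mX_t^\top \mX_t = \mA^{-1}$.

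Finally, I would diagonalize simultaneously using Assumption \ref{assump:commutable}: write $\boldsymbol{\Sigma}_t = \mU \boldsymbol{\Gamma}_t \mU^\top$ with a common orthogonal $\mU$, so that $\mA = \mU (\sum_t n_t \boldsymbol{\Gamma}_t)^{-1} \mU^\top$, and the trace is just the sum of reciprocals of the diagonal entries, giving the claimed formula. This step is essentially routine; the only subtle point is verifying that Assumption \ref{assump:fullrank} is exactly what is needed to invert $\sum_t n_t \boldsymbol{\Sigma}_t$ coordinate-wise after diagonalization, and this is precisely why the assumption is stated per-coordinate as $\sum_t \gamma_j^{(t)} > 0$ rather than as a joint rank condition. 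I do not anticipate any serious obstacle; the main care is just in justifying invertibility and in tracking that the Assumption \ref{assump:fixed_design} fixed-design hypothesis is what makes $\mX_t$ deterministic and allows the $\sigma^2 \Tr(\mA)$ computation to go through cleanly.
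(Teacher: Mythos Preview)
Your proposal is correct and follows essentially the same route as the paper's proof: solve the normal equations, show invertibility of $\sum_t n_t\boldsymbol{\Sigma}_t$ via Assumptions \ref{assump:commutable}--\ref{assump:fullrank}, express $\wh{\vw}_T^{\mathrm{(ORA)}}-\vw_*$ as $(\sum_t n_t\boldsymbol{\Sigma}_t)^{-1}\sum_t \mX_t^\top\boldsymbol{\varepsilon}_t$, compute the expected squared norm as $\sigma^2\Tr\bigl((\sum_t n_t\boldsymbol{\Sigma}_t)^{-1}\bigr)$, and evaluate the trace by simultaneous diagonalization. If anything, your write-up is a bit more explicit in tracking the $\sigma^2$ factor and in justifying why the cross terms vanish.
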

As the task number $T$ increases, the estimation error of ORA is monotonically decreasing. Therefore, it does not suffer from the issue of catastrophic forgetting.

We remark that Assumption \ref{assump:fullrank} still allows a single task to be overparameterized.

\subsection{Main Results}

In this section, we consider a set of specific choices of $\boldsymbol{H}_t = \boldsymbol{U} \boldsymbol{\Lambda}_t \boldsymbol{U}^\top$, where $\boldsymbol{\Lambda}_t=\mathrm{diag}\{\lambda^{(t)}_1,\dots,\lambda^{(t)}_p\}$ is some diagonal matrix.
We show that if $\boldsymbol{\Lambda}_t$ is selected properly, the estimation error of GR is compatible with that of the oracle estimator; as a result, catastrophic forgetting is avoided.

We first decompose the estimation error into components along different directions.
Let $\vu_j\in\sR^p$ be the $j$th column of $\boldsymbol{U}$. 
Define $e_j^{(t)}:=(\vu_j^\top( \wh{\vw}_t^{\mathrm{(GR)}}-\vw_*))^2$ to be the projected estimation error of $\wh{\vw}_t^{\mathrm{(GR)}}$ onto $\vu_j$ for $j\ge 1$ and $e_j^{(0)}:=\left(\boldsymbol{u}_j^{\top}\boldsymbol{w}_*\right)^2$.
Since $\boldsymbol{U}$ is orthogonal, we have $\gL(\wh{\vw}_t^{(\mathrm{GR})})=\sum_{j=1}^p e_j^{(t)}$.

We are ready to present our main result regarding the estimation error of the GR estimator.

\begin{theorem}\label{thm:adaptive-ridge}
Suppose that Assumptions \ref{assump:fixed_design}--\ref{assump:fullrank} hold. Consider $\boldsymbol{H}_t = \boldsymbol{U} \boldsymbol{\Lambda}_t \boldsymbol{U}^\top$, where $\boldsymbol{\Lambda}_t=\mathrm{diag}\{\lambda^{(t)}_1,\dots,\lambda^{(t)}_p\}$ is some diagonal matrix. Then the projected estimation error satisfies
\begin{equation}\label{eq:error_iter}
\begin{aligned}
\E\left[e_j^{(t)}\right]
&= \E\left[e_j^{(t-1)}\right] - 2 \frac{\gamma_j^{(t)} \E\left[e_j^{(t-1)}\right]}{\lambda_j^{(t)} + \gamma_j^{(t)}}\\
&\mathrel{\phantom{=}}{}+ \frac{(\gamma_j^{(t)})^2 \E\left[e_j^{(t-1)}\right] + \gamma_j^{(t)} \sigma^2/n}{(\lambda_j^{(t)} + \gamma_j^{(t)})^2}.
\end{aligned}
\end{equation}
If we set $\boldsymbol{\Lambda}_t$ by
\begin{equation}\label{eq:lambda}
\lambda_j^{(t)} = \frac{\sigma^2/e_j^{(0)} + \gamma_j^{(1)}n_1 + \dots + \gamma_j^{(t-1)}n_{t-1} }{n_t}
\end{equation}
for each $j\in[p]$ and $t\in[T]$, then \eqref{eq:error_iter} is minimized and we have
$$
\E\left[e_j^{(t)}\right] = \frac{\sigma^2}{\sigma^2/e_j^{(0)} + \gamma_j^{(1)}n_1 + \dots + \gamma_j^{(t)}n_t},
$$
which further implies
\begin{equation}\label{eq:main-result}
\gL(\wh{\vw}_t^{(\mathrm{GR})}) = \sum_{j=1}^p \frac{\sigma^2}{\sigma^2/e_j^{(0)} + \gamma_j^{(1)}n_1 + \dots + \gamma_j^{(t)}n_t}.
\end{equation}
\end{theorem}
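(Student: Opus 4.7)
The plan is to reduce the $p$-dimensional matrix recursion to $p$ decoupled scalar recursions by exploiting the common eigenbasis guaranteed by Assumption \ref{assump:commutable}, optimize each one pointwise in $\lambda_j^{(t)}$, and then solve the resulting one-step recursion by induction on $t$. The conceptual key is that once $\boldsymbol{H}_t$ is forced to share $\boldsymbol{\Sigma}_t$'s eigenbasis, every quantity in \eqref{eq:gr} commutes and the problem reduces to $p$ independent one-dimensional ridge updates.

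First I would write down the first-order optimality condition for the quadratic objective in \eqref{eq:gr}, giving
$$
\wh{\vw}_t^{(\mathrm{GR})} = (\boldsymbol{\Sigma}_t + \boldsymbol{H}_t)^{-1}\Bigl(\boldsymbol{\Sigma}_t \vw_* + \tfrac{1}{n_t}\boldsymbol{X}_t^\top \boldsymbol{\varepsilon}_t + \boldsymbol{H}_t \wh{\vw}_{t-1}^{(\mathrm{GR})}\Bigr).
$$
Subtracting $\vw_*$ and using $(\boldsymbol{\Sigma}_t+\boldsymbol{H}_t)^{-1}(\boldsymbol{\Sigma}_t+\boldsymbol{H}_t) = \boldsymbol{I}$ yields the error recursion $\wh{\vw}_t^{(\mathrm{GR})} - \vw_* = (\boldsymbol{\Sigma}_t + \boldsymbol{H}_t)^{-1}\bigl(\boldsymbol{H}_t (\wh{\vw}_{t-1}^{(\mathrm{GR})} - \vw_*) + \tfrac{1}{n_t}\boldsymbol{X}_t^\top \boldsymbol{\varepsilon}_t\bigr)$. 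Projecting onto $\vu_j$ decouples the coordinates because $\boldsymbol{\Sigma}_t$ and $\boldsymbol{H}_t$ are simultaneously diagonalized by $\boldsymbol{U}$; setting $a_j^{(t)} := \vu_j^\top(\wh{\vw}_t^{(\mathrm{GR})} - \vw_*)$ and $\xi_j^{(t)} := \tfrac{1}{n_t}\vu_j^\top \boldsymbol{X}_t^\top \boldsymbol{\varepsilon}_t$, the update collapses to the scalar equation
$$
a_j^{(t)} = \frac{\lambda_j^{(t)}}{\gamma_j^{(t)} + \lambda_j^{(t)}}\, a_j^{(t-1)} + \frac{1}{\gamma_j^{(t)} + \lambda_j^{(t)}}\, \xi_j^{(t)}.
$$

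Next I would square this identity and take expectations. By Assumption \ref{assump:fixed_design}, $\xi_j^{(t)}$ has mean zero and variance $\gamma_j^{(t)}\sigma^2/n_t$, and it is independent of $a_j^{(t-1)}$ since the latter depends only on $\boldsymbol{\varepsilon}_1,\dots,\boldsymbol{\varepsilon}_{t-1}$. The cross term therefore vanishes, yielding $\mathbb{E}[e_j^{(t)}] = (\lambda_j^{(t)})^2 (\lambda_j^{(t)}+\gamma_j^{(t)})^{-2}\mathbb{E}[e_j^{(t-1)}] + \gamma_j^{(t)}\sigma^2/(n_t(\lambda_j^{(t)}+\gamma_j^{(t)})^2)$. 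Expanding $(\lambda_j^{(t)})^2 = (\lambda_j^{(t)}+\gamma_j^{(t)})^2 - 2\gamma_j^{(t)}(\lambda_j^{(t)}+\gamma_j^{(t)}) + (\gamma_j^{(t)})^2$ in the first term produces exactly \eqref{eq:error_iter}.

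Finally I would minimize the right-hand side of \eqref{eq:error_iter} over $\lambda_j^{(t)}$ by calculus. Writing $u := \lambda_j^{(t)}+\gamma_j^{(t)}$ and $c := \mathbb{E}[e_j^{(t-1)}]$, the objective becomes $c - 2\gamma_j^{(t)}c/u + \bigl((\gamma_j^{(t)})^2 c + \gamma_j^{(t)}\sigma^2/n_t\bigr)/u^2$; differentiating in $u$ gives the unique stationary point $u^{*} = \gamma_j^{(t)} + \sigma^2/(n_t c)$, i.e.\ $\lambda_j^{(t)} = \sigma^2/(n_t \mathbb{E}[e_j^{(t-1)}])$. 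Substituting $u^{*}$ back simplifies the recursion to $1/\mathbb{E}[e_j^{(t)}] = 1/\mathbb{E}[e_j^{(t-1)}] + \gamma_j^{(t)} n_t/\sigma^2$. A straightforward induction on $t$, starting from $\mathbb{E}[e_j^{(0)}] = (\vu_j^\top \vw_*)^2 = e_j^{(0)}$, telescopes this to $\mathbb{E}[e_j^{(t)}] = \sigma^2/\bigl(\sigma^2/e_j^{(0)}+\sum_{s=1}^{t}\gamma_j^{(s)}n_s\bigr)$; plugging this back into $\lambda_j^{(t)} = \sigma^2/(n_t \mathbb{E}[e_j^{(t-1)}])$ reproduces \eqref{eq:lambda}, and summing $\mathbb{E}[e_j^{(t)}]$ over $j$ using orthogonality of $\boldsymbol{U}$ delivers \eqref{eq:main-result}. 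I do not expect a genuinely hard step here; the only care point is verifying that $\xi_j^{(t)}$ is uncorrelated with $a_j^{(t-1)}$, which rests on the fixed-design assumption together with independence of noise across tasks.
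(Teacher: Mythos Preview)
Your proposal is correct and follows essentially the same approach as the paper: derive the closed-form error recursion from the normal equations of \eqref{eq:gr}, diagonalize via the common eigenbasis to decouple into scalar updates, exploit independence of $\boldsymbol{\varepsilon}_t$ from $\wh{\vw}_{t-1}^{(\mathrm{GR})}$ to obtain \eqref{eq:error_iter}, minimize in $\lambda_j^{(t)}$, and finish by induction. The only cosmetic difference is that the paper separates the cases $\gamma_j^{(t)}=0$ and $\gamma_j^{(t)}>0$ before optimizing, whereas your reciprocal recursion $1/\E[e_j^{(t)}] = 1/\E[e_j^{(t-1)}] + \gamma_j^{(t)} n_t/\sigma^2$ absorbs the degenerate case automatically.
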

Under the choices of regularization weight matrices given in Theorem \ref{thm:adaptive-ridge}, we see that the estimation error of the GR estimator is monotonically nonincreasing with task index $t$. Indeed, as long as the covariance matrices $\boldsymbol{\Sigma}_t$ are positive definite, the estimation error is strictly decreasing. Therefore, the forgetting error $\gF_t\leq 0$ for every $t\in[T]$ and hence catastrophic forgetting is eliminated, even though we allow a single task to be overparameterized and the covariance matrices to be different across tasks.

Compared with Lemma \ref{lem:ora}, the estimation errors of the GR and oracle estimators are asymptotically equivalent as $T$ increases, even though the latter can only be calculated when pooling data of all tasks together. Indeed, the only difference between them is the additional term $\sigma^2 / e_j^{(0)}$ in the denominator of the estimation error of GR. Therefore, the estimation error of GR is even slightly smaller than that of the oracle estimator. This is because GR has an extra ridge term when learning the first task, whereas the oracle estimator has no regularization term. We also remark that given a fixed set of tasks, the final estimation error $\gL(\wh{\vw}_T^{(\mathrm{GR})})$ is independent of the task ordering, although the choice of $\boldsymbol{H}_t$ is dependent on it.

The key to achieving these desirable properties lies in the specific form of $\{\boldsymbol{H}_t\}_{t=1}^T$.
From the proof of Theorem \ref{thm:adaptive-ridge}, we identify two crucial considerations in choosing $\{\boldsymbol{H}_t\}_{t=1}^T$.
\begin{enumerate}[label=(\arabic*)]
\item The first consideration concerns balancing the trade-off between the information carried in $\wh{\vw}_{t-1}$ and that in $\gD_t$, i.e., the forward--backward trade-off. For example, if the estimation error of $\wh{\vw}_{t-1}$ is relatively small (larger $n_{\tau}$ for $\tau\leq t-1$) compared with the error of the new task, $\sigma^2/n_t$, we should increase the regularization strength $\lambda_j^{(t)}$.
\item The second one involves addressing the information heterogeneity among different tasks. As the covariance matrices vary, the amount of information pertaining to different directions of $\vw_*$ within different tasks may differ. Therefore, $\boldsymbol{\Lambda}_t$ should adapt to this information heterogeneity, allowing $\lambda_i^{(t)}$ and $\lambda_j^{(t)}$ to be different for $i\ne j$.
\end{enumerate}

The choice of hyperparameters specified in Theorem \ref{thm:adaptive-ridge} effectively addresses the forward--backward trade-off and information heterogeneity, thereby avoiding catastrophic forgetting and achieving an estimation error comparable with that of the oracle estimator.

We remark that Theorem \ref{thm:adaptive-ridge} does not necessitate $p<n_t$; it allows any individual task to be overparameterized. 
As long as aggregating all the data leads to an underparameterized linear regression problem, we can progressively improve the estimation of $\vw_*$ as new tasks are continuously introduced using generalized $\ell_2$-regularization. Ultimately, we achieve the error rate of the oracle estimator after completing the final task.

\subsection{A Practical Algorithm}\label{sec:practical-algorithm}

Now we take a closer look at the optimal choice of $\{\boldsymbol{H}_t\}_{t=1}^T$ developed in Theorem \ref{thm:adaptive-ridge}. Substituting \eqref{eq:lambda} into the definition of $\boldsymbol{H}_t$ gives 
$$
\boldsymbol{H}_t = \frac{1}{n_t} (n_1 \boldsymbol{\Sigma}_1 + \dots + n_{t-1} \boldsymbol{\Sigma}_{t-1} + \sigma^2\boldsymbol{U}\boldsymbol{E_0}\boldsymbol{U}^\top),
$$
which is the summation of the covariance matrices of old tasks weighted by sample sizes plus an additional error term. Tasks with larger sample size will be allocated with larger weights in the optimal regularization matrix, which is reasonable since they contain more information about $\vw_*$.

If $n_t$ is sufficiently large, the term $\sigma^2\boldsymbol{U}\boldsymbol{E_0}\boldsymbol{U}^\top/n_t$ in $\boldsymbol{H}_t$ is negligible and we can approximate $\boldsymbol{H}_t$ by
\begin{equation}\label{def:H}
\tilde{\boldsymbol{H}}_t := \frac{1}{n_t} (n_1 \boldsymbol{\Sigma}_1 + \dots + n_{t-1} \boldsymbol{\Sigma}_{t-1}) \approx \boldsymbol{H}_t,
\end{equation}
which can be easily computed in practice. This approximation makes the generalized $\ell_2$-regularized estimator a practical algorithm, which can be implemented without any underlying knowledge about the true parameter.

\paragraph{Connection with other regularization methods.}
Note that in linear regression, the covariance matrix is just the Hessian matrix (or Fisher information matrix) of the loss function. 
This links our GR estimator to some other popular regularization-based algorithms such as EWC and its variants \cite{kirkpatrick2017overcoming, huszar2018note, schwarz2018progress}. 
Specifically, if all tasks have the same sample size, our method recovers the online EWC proposed by \citet{schwarz2018progress} with the hyperparameter $\gamma = 1$.
Our theory gives a precise characterization of how to combine the Fisher information of old tasks properly in continual linear regression. 

\paragraph{Approximate weight matrices.}
We now present a result demonstrating that using the approximate optimal weight matrices has minimal impact on the estimation error when certain conditions are met. To this end, we define $\rho_j^{(t)}:=\gamma_j^{(t)} / (e_0^{(j)} + \gamma_j^{(1)}n_1 + \dots \gamma_j^{(t-1)}n_{t-1})$, which can be viewed as the information ratio between the new task $t$ and the old tasks. A larger $\rho$ indicates that the new task contains more information.

\begin{theorem}\label{thm:approx}
Suppose that Assumptions \ref{assump:fixed_design}--\ref{assump:fullrank} hold.
Assume that we use $\tilde{\boldsymbol{H}}_t:=\mU \tilde{\boldsymbol{\Lambda}}_t \mU^\top$ instead of $\boldsymbol{H}_t$ defined in Theorem \ref{thm:adaptive-ridge} as the regularization weight matrices.
Let $\Delta_j^{(t)}=1/(\wt{\lambda}_j + \gamma_j) - 1/(\lambda_j + \gamma_j)$.
Suppose that there exists some constant $C>0$ such that
\begin{equation}\label{eq:sensitivity}
(\gamma_j^{(t)} \Delta_j^{(t)})^2 \leq \frac{C(C-1)(\rho_j^{(t)})^2}{(1+\rho_j^{(t)})(1+C\rho_j^{(t)})^2}
\end{equation}
for each $t$. Then for every $t\in[T]$, we have
$$
\gL(\wh{\vw}_t^{(\mathrm{GR})}) \leq \frac{C}{e_j^{(0)} + \gamma_j^{(1)}n_1 + \dots + \gamma_j^{(t)} n_t}.
$$
\end{theorem}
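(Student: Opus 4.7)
The plan is to argue by induction on $t\in[T]$ that, for each coordinate $j\in[p]$,
\begin{equation*}
\E\bigl[e_j^{(t)}\bigr] \;\leq\; \frac{C\sigma^2}{\sigma^2/e_j^{(0)} + \sum_{\tau\leq t}\gamma_j^{(\tau)} n_\tau},
\end{equation*}
which yields the stated per-coordinate bound of the theorem and, upon summing over $j$, a bound on the full estimation error $\gL(\wh{\vw}_t^{(\mathrm{GR})}) = \sum_j \E[e_j^{(t)}]$. The base case $t=0$ is immediate once $C\geq 1$, since $\E[e_j^{(0)}] = e_j^{(0)}$. For the inductive step, I would first observe that the derivation of the recursion \eqref{eq:error_iter} in Theorem~\ref{thm:adaptive-ridge} uses only that the regularization matrix is diagonal in the eigenbasis $\boldsymbol{U}$, and therefore holds verbatim with $\tilde\lambda_j^{(t)}$ in place of $\lambda_j^{(t)}$.

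The algebraic heart of the argument is that the right-hand side of \eqref{eq:error_iter}, for fixed $E := \E[e_j^{(t-1)}]$, is a quadratic in $x := 1/(\tilde\lambda_j^{(t)} + \gamma_j^{(t)})$ with positive leading coefficient $(\gamma_j^{(t)})^2 E + \gamma_j^{(t)}\sigma^2/n_t$. Its minimizer is exactly the optimal $\lambda_j^{(t)}$ identified in Theorem~\ref{thm:adaptive-ridge}, and completing the square around that minimizer gives
\begin{equation*}
\E\bigl[e_j^{(t)}\bigr] \;=\; \frac{E\,\sigma^2/n_t}{\gamma_j^{(t)}E + \sigma^2/n_t} \;+\; \Bigl((\gamma_j^{(t)})^2 E + \gamma_j^{(t)}\sigma^2/n_t\Bigr)\bigl(\Delta_j^{(t)}\bigr)^2.
\end{equation*}
Both summands are monotone non-decreasing in $E$, so the inductive hypothesis $E \leq C\sigma^2/D_{t-1}^{(j)}$, with $D_{t-1}^{(j)} := \sigma^2/e_j^{(0)} + \sum_{\tau<t}\gamma_j^{(\tau)} n_\tau$, upper-bounds the first summand by $C\sigma^2/(D_{t-1}^{(j)} + C n_t \gamma_j^{(t)})$, already within a $(C-1)$-sized slack of the target.

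It then suffices to verify that the perturbation term fits inside this slack, namely
\begin{equation*}
\frac{C\sigma^2}{D_t^{(j)}} - \frac{C\sigma^2}{D_{t-1}^{(j)} + Cn_t\gamma_j^{(t)}} \;=\; \frac{C(C-1)\sigma^2\, n_t\gamma_j^{(t)}}{(D_{t-1}^{(j)} + n_t\gamma_j^{(t)})(D_{t-1}^{(j)} + Cn_t\gamma_j^{(t)})}.
\end{equation*}
Substituting the inductive upper bound into the perturbation coefficient, dividing through, and rewriting everything in terms of the information ratio $\rho_j^{(t)}$ converts the required inequality into precisely the form $(\gamma_j^{(t)}\Delta_j^{(t)})^2 \leq \tfrac{C(C-1)(\rho_j^{(t)})^2}{(1+\rho_j^{(t)})(1+C\rho_j^{(t)})^2}$, i.e.\ hypothesis \eqref{eq:sensitivity}; this closes the induction. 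The main obstacle is this final bookkeeping step: because $E$ appears in both the ``bias-like'' leading term and the ``variance-like'' perturbation coefficient, one must verify monotonicity of both in $E$ before substituting the inductive bound, and must carefully track where the sample size $n_t$ is absorbed into $\rho_j^{(t)}$ versus $\gamma_j^{(t)}$ so that the derived quadratic inequality on $\Delta_j^{(t)}$ aligns exactly with the clean form of \eqref{eq:sensitivity}.
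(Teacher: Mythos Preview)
Your proposal is essentially identical to the paper's own proof: both argue by induction on $t$, apply the recursion \eqref{eq:error_iter} with $\tilde\lambda_j^{(t)}$, complete the square around the minimizer to isolate a leading term plus a $(\Delta_j^{(t)})^2$-perturbation, substitute the inductive bound into both terms via monotonicity in $E$, and then rewrite the resulting inequality in terms of $\rho_j^{(t)}$ to recover exactly \eqref{eq:sensitivity}. Your explicit mention of monotonicity in $E$ and the bookkeeping around $n_t$ versus $\rho_j^{(t)}$ is slightly more careful than the paper's write-up, but the route is the same.
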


Since $\rho_j^{(t)}$ is of order $o(1)$, the right-hand side of \eqref{eq:sensitivity} is roughly $O((\rho_{j}^{(t)})^2)$. 
Therefore, as $\rho_j^{(t)}$ becomes larger, which means that there is relatively more information of $\vu_j^\top \vw_*$ contained in the new task $t$, the requirement on the approximation accuracy of $\wt{\lambda}_j^{(t)}$ becomes looser. In this case, we can still attain the oracle rate without calculating the optimal regularization matrix very accurately. 
In Section \ref{sec:experiments}, we will conduct experiments to illustrate the performance of the generalized $\ell_2$-regularized estimator using $\tilde{\mH}_t$ defined in \eqref{def:H} instead of ${\mH}_t$.

\section{Connection Between Early Stopping and $\ell_2$-Regularization}\label{sec:earlystopping}
Besides adding a penalty term to the loss function, another commonly used regularization method is early stopping. 
When training a single task, several works \cite{raskutti2014early,ali2019continuous} have shown that applying gradient descent with early stopping is equivalent to ridge regression, in both classification and regression tasks. 
However, in continual learning where there is a sequence of tasks to be learned, similar results are still limited. 
In this section, we show that such equivalence also exists in continual linear regression.

Specifically, we formulate the early stopping estimator (ES) for continual linear regression in the following algorithm. Specifically, let $\wh{\vw}_0^{(\mathrm{ES})}=0$ be the initial value. At each task $t$, we set $\wh{\vw}_{t-1}^{(\mathrm{ES})}$ as the initial point and apply $m_t$-step gradient descent to the loss function of this new task, where $\boldsymbol{A}_t$ is a positive definite matrix used to control the learning rate and $m_t$ is the number of gradient descent iterations. 
\begin{algorithm}[htbp]\label{alg:earlystopping}
\caption{Early stopping estimator}
\begin{algorithmic}
\STATE {\bfseries Initialization:} $\wh{\vw}_0^{\mathrm{(ES)}} = 0$
\FOR{\textbf{each task} $t=1$ {\bfseries to} $T$}
\STATE $\vw_t^{(0)} = \wh{\vw}_{t-1}^{(\mathrm{ES})};$
\FOR{$\tau=1$ {\bfseries to} $m_t$}
\STATE $\quad \vw_t^{(\tau)} = \vw_t^{(\tau-1)} - (\boldsymbol{A}_t/n) \boldsymbol{X}_t^\top (\boldsymbol{X}_t \vw_t^{(\tau-1)} - \boldsymbol{y}_t);$
\ENDFOR
\STATE $\wh{\vw}_{t}^{(\mathrm{ES})} = \vw_t^{(m_t)};$
\ENDFOR
\end{algorithmic}
\end{algorithm}

Note that in ordinary gradient descent, $\boldsymbol{A}_t$ is simply $s_t \boldsymbol{I}_p$ for some $s_t>0$, which we refer to as \textit{vanilla early stopping} (vanilla ES). In contrast, here we take a more general form of the learning rate matrix in order to capture the information heterogeneity and align with the generalized $\ell_2$-regularization studied above.

The following theorem establishes the equivalence between the ES and GR estimators.

\begin{theorem}\label{thm:earlystopping}
Assume that $\boldsymbol{\Sigma}_t = \boldsymbol{U}_t \boldsymbol{\Gamma}_t \boldsymbol{U}_t^\top$, $\boldsymbol{A}_t = \boldsymbol{U}_t \boldsymbol{S}_t \boldsymbol{U}_t^\top$, and $\boldsymbol{H}_t = \boldsymbol{U}_t \boldsymbol{\Lambda}_t \boldsymbol{U}_t^\top$ for some positive definite diagonal matrices $\boldsymbol{\Gamma}_t = \mathrm{diag}\{\gamma_1^{(t)},\dots,\gamma_p^{(t)}\}$, $\boldsymbol{S}_t = \mathrm{diag}\{s_1^{(t)},\dots,s_p^{(t)}\}$ and $\boldsymbol{\Lambda}_t = \mathrm{diag}\{\lambda_1^{(t)},\dots,\lambda_p^{(t)}\}$ satisfying
\begin{equation}\label{eq:equivalence}
\lambda_j^{(t)} = \frac{\gamma_j^{(t)} (1-s_j^{(t)} \gamma_j^{(t)})^{m_t}}{1-(1-s_j^{(t)} \gamma_j^{(t)})^{m_t}}
\end{equation}
for each $j\in[p]$ and $t\in[T]$.
Then for each $t\in[T]$, we have
$$
\wh{\vw}_t^{(\mathrm{ES})} = \wh{\vw}_t^{(\mathrm{GR})},
$$
where $\wh{\vw}_t^{(\mathrm{ES})}$ is the ES estimator using the learning rate matrix $\boldsymbol{A}_t$, and $\wh{\vw}_t^{(\mathrm{GR})}$ is the GR estimator using the regularization weight matrix $\boldsymbol{H}_t$.
\end{theorem}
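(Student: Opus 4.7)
The plan is to derive closed-form expressions for both $\wh{\vw}_t^{(\mathrm{ES})}$ and $\wh{\vw}_t^{(\mathrm{GR})}$, diagonalize everything in the common eigenbasis $\boldsymbol{U}_t$, and then match coefficients coordinate-by-coordinate via induction on $t$.

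First, for the GR side I would differentiate the objective in \eqref{eq:gr} and use first-order optimality to obtain
$$
\wh{\vw}_t^{(\mathrm{GR})} = (\boldsymbol{\Sigma}_t + \boldsymbol{H}_t)^{-1}\!\left(\tfrac{1}{n}\mX_t^\top \vy_t + \boldsymbol{H}_t\,\wh{\vw}_{t-1}^{(\mathrm{GR})}\right),
$$
which is well-defined because $\boldsymbol{\Gamma}_t$ and $\boldsymbol{\Lambda}_t$ are positive definite. For the ES side, the inner loop is a linear recursion $\vw_t^{(\tau)} = (\boldsymbol{I} - \boldsymbol{A}_t \boldsymbol{\Sigma}_t)\vw_t^{(\tau-1)} + (\boldsymbol{A}_t/n)\mX_t^\top \vy_t$. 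Unrolling $m_t$ steps and summing the geometric series (using $\boldsymbol{A}_t\boldsymbol{\Sigma}_t$ invertible so the partial-sum identity $\sum_{k=0}^{m_t-1}(\boldsymbol{I}-\boldsymbol{M})^k = \boldsymbol{M}^{-1}(\boldsymbol{I}-(\boldsymbol{I}-\boldsymbol{M})^{m_t})$ applies), I get
$$
\wh{\vw}_t^{(\mathrm{ES})} = \boldsymbol{P}_t\,\wh{\vw}_{t-1}^{(\mathrm{ES})} + \boldsymbol{\Sigma}_t^{-1}(\boldsymbol{I} - \boldsymbol{P}_t)\cdot \tfrac{1}{n}\mX_t^\top \vy_t,
$$
where $\boldsymbol{P}_t := (\boldsymbol{I} - \boldsymbol{A}_t\boldsymbol{\Sigma}_t)^{m_t}$.

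Next, I project both recursions onto the basis $\boldsymbol{U}_t$. Because $\boldsymbol{\Sigma}_t$, $\boldsymbol{A}_t$, and $\boldsymbol{H}_t$ are simultaneously diagonalized by $\boldsymbol{U}_t$, every matrix appearing in the two updates becomes diagonal, and each coordinate $j$ evolves independently. Writing $\alpha_j^{(t)} := (1 - s_j^{(t)}\gamma_j^{(t)})^{m_t}$ and letting $\tilde{y}_j := (\boldsymbol{U}_t^\top \tfrac{1}{n}\mX_t^\top \vy_t)_j$, the coordinate-$j$ updates read
$$
(\boldsymbol{U}_t^\top\wh{\vw}_t^{(\mathrm{GR})})_j = \frac{\tilde{y}_j + \lambda_j^{(t)}(\boldsymbol{U}_t^\top\wh{\vw}_{t-1}^{(\mathrm{GR})})_j}{\gamma_j^{(t)} + \lambda_j^{(t)}},
$$
$$
(\boldsymbol{U}_t^\top\wh{\vw}_t^{(\mathrm{ES})})_j = \alpha_j^{(t)}(\boldsymbol{U}_t^\top\wh{\vw}_{t-1}^{(\mathrm{ES})})_j + \frac{1-\alpha_j^{(t)}}{\gamma_j^{(t)}}\tilde{y}_j.
$$
Equating the coefficients of $(\boldsymbol{U}_t^\top\wh{\vw}_{t-1})_j$ and $\tilde{y}_j$ in these two scalar updates gives the two conditions $\lambda_j^{(t)}/(\gamma_j^{(t)}+\lambda_j^{(t)}) = \alpha_j^{(t)}$ and $1/(\gamma_j^{(t)}+\lambda_j^{(t)}) = (1-\alpha_j^{(t)})/\gamma_j^{(t)}$, which are easily verified to be equivalent to one another and both equivalent to $\lambda_j^{(t)} = \gamma_j^{(t)}\alpha_j^{(t)}/(1-\alpha_j^{(t)})$, i.e., the hypothesis \eqref{eq:equivalence}.

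Finally I close the argument by induction on $t$: the base case $\wh{\vw}_0^{(\mathrm{ES})} = \wh{\vw}_0^{(\mathrm{GR})} = 0$ holds trivially; assuming $\wh{\vw}_{t-1}^{(\mathrm{ES})} = \wh{\vw}_{t-1}^{(\mathrm{GR})}$, the two coordinate-wise updates produce the same vector, so $\wh{\vw}_t^{(\mathrm{ES})} = \wh{\vw}_t^{(\mathrm{GR})}$. The main obstacle is essentially bookkeeping: unrolling the gradient-descent recursion carefully and making sure the changes of basis are applied consistently, in particular noting that although $\boldsymbol{U}_t$ may vary with $t$, the previous iterate $\wh{\vw}_{t-1}$ is a fixed vector that can be expressed in whichever basis diagonalizes the current task, so the coordinate-wise matching at task $t$ does not require the basis from task $t-1$ to agree.
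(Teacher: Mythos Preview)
Your proposal is correct and follows essentially the same route as the paper: unroll the gradient-descent recursion to a closed form, diagonalize both the ES and GR updates in the eigenbasis $\boldsymbol{U}_t$, and match the scalar coefficients on the previous iterate and the data term, which reduces to exactly the relation \eqref{eq:equivalence}. The only cosmetic difference is that the paper centers everything at $\vw_*$ (writing the updates for $\wh{\vw}_t-\vw_*$ in terms of $\mX_t^\top\boldsymbol{\varepsilon}_t$) whereas you work with $\wh{\vw}_t$ and $\mX_t^\top\vy_t$ directly; the induction you spell out explicitly is implicit in the paper's argument.
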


Note that this result does not require commutable covaraince matrices in Assumption \ref{assump:commutable}. From Theorem \ref{thm:earlystopping} we conclude that with some proper choices of the learning rate matrix $\boldsymbol{A}_t$ and regularization weight matrix $\boldsymbol{H}_t$, the ES estimator $\wh{\vw}_t^{(\mathrm{ES})}$ and the GR estimator $\wh{\vw}_t^{(\mathrm{GR})}$ output exactly the same estimates for each $t$. Indeed, the errors $\wh{\vw}_t - \vw_*$ of these two estimators are both the weighted average of the error of the $(t-1)$th task $\wh{\vw}_{t-1}-\vw_*$ and the variance term for the new task, $\mX_t^\top\mvarepsilon_t/n$, where the weights are determined by the learning rate matrix $\boldsymbol{A}_t$, iteration number $m_t$, and regularization weight matrix $\boldsymbol{H}_t$.

We remark that \eqref{eq:equivalence} is required to hold for each $j\in[p]$.
Therefore, vanilla ES with $\mA_t=s_t\mI_p$ and vanilla $\ell_2$-regularization with $\mH_t=\lambda_t\mI_p$ may not be equivalent since $\gamma_j^{(t)}$ could be different for different $j$ and a single $\lambda_t$ and $s_t$ could not make \eqref{eq:equivalence} hold for every $j\in[p]$.
It could happen that vanilla ES is equivalent to some generalized $\ell_2$-regularized estimator or vice verse.

Similar to $\ell_2$-regularization, early stopping with proper learning rate matrix $\mA_t$ can also avoid catastrophic forgetting and attain the oracle rate. 

\begin{corollary}\label{coro:es}
Suppose that Assumption \ref{assump:fixed_design}--\ref{assump:fullrank} hold. Assume that $\boldsymbol{A}_t =\boldsymbol{U}\boldsymbol{S}_t \boldsymbol{U}^\top$ for some diagonal matrix $\boldsymbol{S}_t = \mathrm{diag}\{s_1^{(t)},\dots,s_p^{(t)}\}$ satisfying
\begin{equation}\label{eq:eses}
\left(1-s_j^{(t)} \gamma_j^{(t)}\right)^{m_t} = 1- \frac{\gamma_j^{(t)} n_t}{\sigma^2/e_j^{(0)} + \gamma_j^{(1)} n_1 + \dots + \gamma_j^{(t)} n_t}
\end{equation}
for each $j\in[p]$. 
Then the estimation error of $\wh{\vw}_t^{(\mathrm{ES})}$ is 
$$
\gL(\wh{\vw}_t^{(\mathrm{ES})}) = \sum_{j=1}^p \frac{\sigma^2}{\sigma^2/e_j^{(0)} + \gamma_j^{(1)}n_1 + \dots + \gamma_j^{(t)}n_t}.
$$
\end{corollary}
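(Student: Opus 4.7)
The plan is to reduce this corollary to the combination of Theorem \ref{thm:earlystopping} and Theorem \ref{thm:adaptive-ridge}. Theorem \ref{thm:earlystopping} tells us that for any positive-definite learning-rate matrix $\boldsymbol{A}_t$, the early-stopping iterate $\wh{\vw}_t^{(\mathrm{ES})}$ coincides exactly with the generalized $\ell_2$-regularized estimate $\wh{\vw}_t^{(\mathrm{GR})}$ whose regularization matrix $\boldsymbol{H}_t$ has eigenvalues determined by \eqref{eq:equivalence}. Theorem \ref{thm:adaptive-ridge} tells us that when the eigenvalues $\lambda_j^{(t)}$ of $\boldsymbol{H}_t$ are chosen as in \eqref{eq:lambda}, the GR estimator attains the claimed error rate. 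Thus it suffices to verify that the $s_j^{(t)}$ prescribed by \eqref{eq:eses} induce, via the correspondence \eqref{eq:equivalence}, exactly the optimal $\lambda_j^{(t)}$ of \eqref{eq:lambda}.

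First I would introduce the shorthand $D_j^{(t)} := \sigma^2/e_j^{(0)} + \sum_{\tau=1}^{t} \gamma_j^{(\tau)} n_\tau$, so that \eqref{eq:eses} reads
\begin{equation*}
\bigl(1-s_j^{(t)}\gamma_j^{(t)}\bigr)^{m_t} \;=\; 1 - \frac{\gamma_j^{(t)} n_t}{D_j^{(t)}} \;=\; \frac{D_j^{(t-1)}}{D_j^{(t)}}.
\end{equation*}
Substituting this value of $(1-s_j^{(t)}\gamma_j^{(t)})^{m_t}$ into the right-hand side of \eqref{eq:equivalence} gives
\begin{equation*}
\lambda_j^{(t)} \;=\; \frac{\gamma_j^{(t)} \cdot D_j^{(t-1)}/D_j^{(t)}}{\gamma_j^{(t)} n_t / D_j^{(t)}} \;=\; \frac{D_j^{(t-1)}}{n_t} \;=\; \frac{\sigma^2/e_j^{(0)} + \gamma_j^{(1)} n_1 + \dots + \gamma_j^{(t-1)} n_{t-1}}{n_t},
\end{equation*}
which is precisely the optimal hyperparameter \eqref{eq:lambda} from Theorem \ref{thm:adaptive-ridge}. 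Since Assumption \ref{assump:commutable} ensures the common eigenbasis $\boldsymbol{U}$, the resulting $\boldsymbol{H}_t = \boldsymbol{U}\boldsymbol{\Lambda}_t\boldsymbol{U}^\top$ is in the family covered by Theorem \ref{thm:adaptive-ridge}.

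Then I would invoke Theorem \ref{thm:earlystopping} to conclude $\wh{\vw}_t^{(\mathrm{ES})} = \wh{\vw}_t^{(\mathrm{GR})}$ for every $t$, and apply Theorem \ref{thm:adaptive-ridge} to read off the closed-form estimation error, obtaining the expression in the statement. The only subtlety worth checking is that the choice of $s_j^{(t)}$ is admissible, i.e., that $\boldsymbol{A}_t$ is positive definite and the base $1 - s_j^{(t)}\gamma_j^{(t)}$ lies in $[0,1)$ so that \eqref{eq:eses} has a real solution for any integer $m_t \ge 1$; this follows because $D_j^{(t-1)}/D_j^{(t)} \in (0,1)$ under Assumption \ref{assump:fullrank}, which guarantees $s_j^{(t)} = (1 - (D_j^{(t-1)}/D_j^{(t)})^{1/m_t})/\gamma_j^{(t)} > 0$. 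I do not anticipate a serious obstacle; the corollary is essentially a translation, and the main step is the algebraic simplification $(D_j^{(t)} - \gamma_j^{(t)} n_t)/D_j^{(t)} = D_j^{(t-1)}/D_j^{(t)}$ above.
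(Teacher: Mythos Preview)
Your proposal is correct and follows essentially the same route as the paper: invoke Theorem \ref{thm:earlystopping} to identify the ES iterate with a GR estimator, verify that the induced $\lambda_j^{(t)}$ coincides with the optimal choice \eqref{eq:lambda}, and then read off the error from Theorem \ref{thm:adaptive-ridge}. Your algebraic check via the shorthand $D_j^{(t)}$ is in fact more explicit than the paper's own proof, which simply rewrites the right-hand side of \eqref{eq:eses} as $\frac{\sigma^2/(\gamma_j n_t)}{\E[e_j^{(t-1)}]+\sigma^2/(\gamma_j n_t)}$ and cites the two theorems.
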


If the new task $t$ has a larger sample size $n_t$, the term $(1-s_j\gamma_j)^{m_t}$ should decrease by \eqref{eq:eses}, implying that both the learning rate $s_j^{(t)}$ and the iteration number $m_t$ should be increased. 
This means that when task $t$ provides more information, we should traverse a more extensive path in the gradient descent process, allowing for a deeper utilization of the new data.

\section{Extensions}\label{sec:extension}

In this section, we discuss some possible extensions to relax our model assumptions.

\paragraph{Commutable covariance matrices.}
The main purpose of Assumption \ref{assump:commutable} is to obtain explicit forms for some crucial quantities of $\ell_2$-regularized estimators, such as the optimal regularization matrix $\mH_t$ and the corresponding optimal estimation error.

Without this assumption, even though the optimal estimation error does not have an explicit form, we can still show that there exist some regularization weight matrices such that the estimation error is monotonically nonincreasing with $t$. Therefore, catastrophic forgetting can still be avoided.

Specifically, we have the following result without Assumption \ref{assump:commutable}.

\begin{theorem}\label{thm:extension}
There exist $\{\lambda_j^{(t)},j=1,\dots,p,t=1\dots,T\}$ such that for each $t\in[T]$,
$$
\gL(\wh{\vw}_t^{(\mathrm{GR})}) \leq \gL(\wh{\vw}_{t-1}^{(\mathrm{GR})}),
$$
where the strict inequality holds for $t$ satisfying $\sum_{j=1}^p \gamma_j^{(t)} > 0$.
\end{theorem}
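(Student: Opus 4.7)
The plan is to argue by induction on $t$: given $\wh{\vw}_{t-1}^{(\mathrm{GR})}$, I pick $\mH_t$ so that the loss does not go up, and goes down strictly whenever $\boldsymbol{\Sigma}_t\neq 0$. The starting point is that even without Assumption~\ref{assump:commutable}, the first-order optimality condition for \eqref{eq:gr} still gives the closed form $\wh{\vw}_t^{(\mathrm{GR})} = (\boldsymbol{\Sigma}_t+\mH_t)^{-1}(\mX_t^\top\vy_t/n_t + \mH_t\wh{\vw}_{t-1}^{(\mathrm{GR})})$. Writing $\ve_t := \wh{\vw}_t^{(\mathrm{GR})}-\vw_*$ and substituting $\vy_t=\mX_t\vw_*+\mvarepsilon_t$, the error decomposes as
$$
\ve_t = (\boldsymbol{\Sigma}_t+\mH_t)^{-1}\mH_t\,\ve_{t-1} + (\boldsymbol{\Sigma}_t+\mH_t)^{-1}\mX_t^\top\mvarepsilon_t/n_t,
$$
so by independence of $\mvarepsilon_t$ from $\ve_{t-1}$,
$$
\gL(\wh{\vw}_t^{(\mathrm{GR})}) = \E\bigl[\ve_{t-1}^\top \mH_t(\boldsymbol{\Sigma}_t+\mH_t)^{-2}\mH_t\ve_{t-1}\bigr] + \frac{\sigma^2}{n_t}\Tr\bigl(\boldsymbol{\Sigma}_t(\boldsymbol{\Sigma}_t+\mH_t)^{-2}\bigr).
$$

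The construction I propose is the scalar family $\mH_t = \alpha_t\mI$, regarded as a function of $\beta_t := 1/\alpha_t$. Sending $\beta_t\to 0^+$ drives $\wh{\vw}_t\to\wh{\vw}_{t-1}$, so the loss limit is $\gL(\wh{\vw}_{t-1}^{(\mathrm{GR})})$. Expanding $(\mI+\beta_t\boldsymbol{\Sigma}_t)^{-2}=\mI-2\beta_t\boldsymbol{\Sigma}_t+O(\beta_t^2)$ in the first term and noting that the noise term is itself $O(\beta_t^2)$, I get
$$
\gL(\wh{\vw}_t^{(\mathrm{GR})}) = \gL(\wh{\vw}_{t-1}^{(\mathrm{GR})}) - 2\beta_t\,\E[\ve_{t-1}^\top\boldsymbol{\Sigma}_t\ve_{t-1}] + O(\beta_t^2).
$$
Whenever $\E[\ve_{t-1}^\top\boldsymbol{\Sigma}_t\ve_{t-1}]>0$, a sufficiently small positive $\beta_t$ (equivalently, $\alpha_t$ large but finite) delivers the required strict decrease; when $\boldsymbol{\Sigma}_t=0$, sending $\alpha_t\to\infty$ returns equality. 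Setting $\lambda_j^{(t)}:=\alpha_t$ for every $j$ then matches the theorem's parameterization and closes the induction.

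The main obstacle is the knife-edge case with $\boldsymbol{\Sigma}_t\neq 0$ yet $\E[\ve_{t-1}^\top\boldsymbol{\Sigma}_t\ve_{t-1}]=0$, which forces $\ve_{t-1}\in\ker(\boldsymbol{\Sigma}_t)$ almost surely; there any finite $\mH_t$ only pumps in task-$t$ noise without removing bias, so no scalar $\mH_t$ gives strict decrease. I would handle this by exploiting the noise already baked into $\ve_{t-1}$ from earlier steps: once $\mathrm{range}(\boldsymbol{\Sigma}_\tau)$ for some $\tau<t$ overlaps $\mathrm{range}(\boldsymbol{\Sigma}_t)$, the past noise, spread by $(\boldsymbol{\Sigma}_\tau+\mH_\tau)^{-1}\mX_\tau^\top$, leaves $\E[\ve_{t-1}\ve_{t-1}^\top]$ with a nonzero projection onto $\mathrm{range}(\boldsymbol{\Sigma}_t)$, and the previous argument applies verbatim. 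The residual pathology where $\mathrm{range}(\boldsymbol{\Sigma}_t)$ is orthogonal both to $\vw_*$ and to every earlier task's range is information-free and can be excluded by a mild nondegeneracy assumption on the task sequence, or accommodated by letting $\alpha_t\to\infty$ so that $\gL(\wh{\vw}_t)=\gL(\wh{\vw}_{t-1})$.
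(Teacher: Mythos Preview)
Your route differs from the paper's. The paper aligns $\mH_t$ with the eigenbasis $\mU_t$ of $\boldsymbol{\Sigma}_t$ and picks the per-direction minimizer $\lambda_j^{(t)}=\sigma^2/(n_t\,\E[e_{j,t}^{(t-1)}])$ (where $e_{j,t}^{(t-1)}=((\vu_j^{(t)})^\top\ve_{t-1})^2$), obtaining the harmonic-mean recursion $\E[e_{j,t}^{(t)}]=\bigl((\E[e_{j,t}^{(t-1)}])^{-1}+\gamma_j^{(t)}n_t/\sigma^2\bigr)^{-1}\le\E[e_{j,t}^{(t-1)}]$; summing over $j$ gives the non-strict inequality, and strictness is asserted whenever some $\gamma_j^{(t)}>0$. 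Your scalar choice $\mH_t=\alpha_t\mI$ with a first-order expansion in $\beta_t=1/\alpha_t$ is more elementary and sidesteps the eigenbasis bookkeeping; it yields the same generic strict decrease via the derivative $-2\,\E[\ve_{t-1}^\top\boldsymbol{\Sigma}_t\ve_{t-1}]$, at the cost of the quantitative bound the paper's choice delivers.

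On the knife-edge case $\boldsymbol{\Sigma}_t\neq 0$ with $\E[\ve_{t-1}^\top\boldsymbol{\Sigma}_t\ve_{t-1}]=0$: you are right to flag it, and the paper simply writes ``obviously'' without addressing it. In fact the obstruction is intrinsic, not specific to scalar $\mH_t$: since $\boldsymbol{\Sigma}_t\ve_{t-1}=0$ a.s.\ implies $(\boldsymbol{\Sigma}_t+\mH_t)^{-1}\mH_t\,\ve_{t-1}=\ve_{t-1}$ for \emph{any} positive definite $\mH_t$, the bias term stays exactly at $\gL(\wh{\vw}_{t-1}^{(\mathrm{GR})})$ while the noise term $\frac{\sigma^2}{n_t}\Tr\bigl(\boldsymbol{\Sigma}_t(\boldsymbol{\Sigma}_t+\mH_t)^{-2}\bigr)$ is strictly positive. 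Hence no finite $\mH_t$ can even match the previous loss here (take $t=1$, $\boldsymbol{\Sigma}_1\vw_*=0$, $\boldsymbol{\Sigma}_1\neq 0$ for a concrete instance). Your proposed rescue via prior-task noise is not airtight either --- overlap of $\mathrm{range}(\boldsymbol{\Sigma}_\tau)$ with $\mathrm{range}(\boldsymbol{\Sigma}_t)$ is neither necessary nor sufficient once the intervening shrinkage maps act --- but the deficiency lies in the theorem's strict-inequality clause rather than in your overall strategy; a mild nondegeneracy hypothesis or allowing $\lambda_j^{(t)}=+\infty$ would close the gap for both arguments.
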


Intuitively, the condition $\sum_{j=1}^p \gamma_j^{(t)}$ means that task $t$ has nonzero information about the true parameter $\vw_*$. Therefore, there always exists some choice of the regularization weight matrix under which we can leverage the new information and improve on the existing estimator. 

Moreover, in Section \ref{sec:experiments} we will empirically show that violating this assumption will not cause a significant performance degradation for our method.

\paragraph{Other loss functions.}
Our theory can be extended to general convex loss functions. In this scenario, the Hessian matrix of the loss function at the true parameter plays the role of the data covariance matrix in linear regression. The heterogeneity among different tasks is encoded by the differences in the Hessian matrices. Our analysis can then proceed with some modifications.

\paragraph{Common true parameters.}
Our model \eqref{eq:data} assumes that all tasks share the same true parameter $\vw_*$. In real-world continual learning, new challenges may arise when the true parameters are different across tasks. Analyzing the setting with distinct true parameters may need to introduce more trade-offs and insights. For example, if the parameters are not too far apart, our results may still hold with an additional error term. On the other hand, if the parameters differ significantly, negative transfer may dominate and continual learning might not work at all. We leave a comprehensive analysis of this problem to future work.

\section{Experiments}\label{sec:experiments}

We conduct simulation experiments to illustrate the performance of continual ridge regression (CRR), the minimum norm estimator (MN), and the generalized $\ell_2$-regularized estimator (GR).

\paragraph{Data generation.}
We consider two data generating settings, namely with and without covariate shift. The difference between them is whether the covariance matrices are the same for different tasks.
\begin{enumerate}[label=(\arabic*)]
\item \textit{Without covariate shift.}
The true parameter $\vw_*$ is sampled from $\gN(0, \mI_p)$ and is fixed for each task.
The features $\vx_i^{(t)}$ are independently sampled from $\gN(0, \mI_p)$ and the noises ${\varepsilon}_i^{(t)}$ are independently sampled from $\gN(0, \sigma^2)$. 
Then the labels are generated by $y_i^{(t)}=\vw_*^\top \vx_i^{(t)} + \varepsilon_i^{(t)}$.
\item \textit{With covariate shift.}
The true parameter $\vw_*$ is sampled from $\gN(0, \mI_p)$ and is fixed for each task. The covariance matrices of the features are generated as follows. We first randomly sample the eigenvalues $\gamma_t^{(j)}$ by $P(\gamma_t^{(j)}=1)=0.99$ and $P(\gamma_t^{(j)}=100)=0.01$. Then the covariance matrices are set by $\boldsymbol{\Sigma}_t:=\mathrm{diag}\{\gamma_t^{(1)}, \dots, \gamma_t^{(p)}\}$. 
After the covariance matrices are generated, the features $\vx_i^{(t)}$ are independently sampled from $\gN(0, \boldsymbol{\Sigma}_t)$ and the noises ${\varepsilon}_i^{(t)}$ are independently sampled from $\gN(0, \sigma^2)$. 
Finally, the labels are generated by $y_i^{(t)}=\vw_*^\top \vx_i^{(t)} + \varepsilon_i^{(t)}$.
\end{enumerate}

\paragraph{Experimental configuration.}
We compare the performance of the CRR, MN, and GR algorithms with that of the oracle estimator. The regularization weight matrices of GR are set to $\tilde{\mH}_t$ as discussed in Section \ref{sec:practical-algorithm}.

We set the task number $T=20$ and sample size $n_1=\dots=n_t=150$. The parameter dimension $p=200$, and hence each single task is overparameterized.
We consider two noise levels: $\sigma^2=1$ or $5$.
We repeated our experiments $100$ times and present the average results.

\paragraph{Simulation results.}

\begin{figure}[ht]
\centering
\includegraphics[width=1\linewidth]{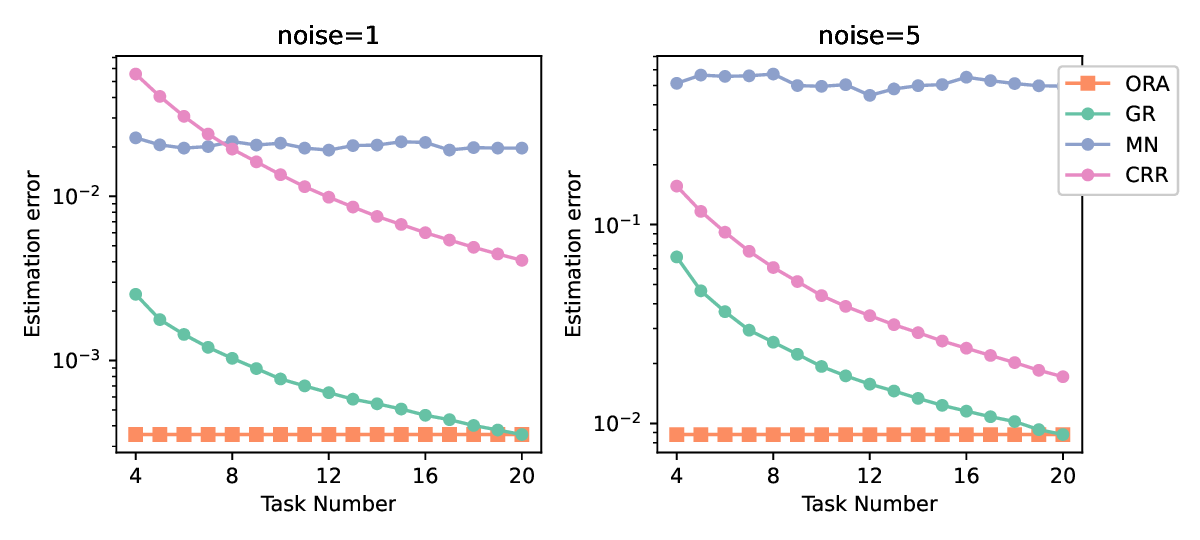}
\caption{Simulation results for different noise levels: $T=20$, $n_t=150$, $p=200$, $\sigma^2=1$ or $5$, and no covariate shift.}
\label{fig:1}
\end{figure}

The simulation results for different noise levels are depicted in Figure \ref{fig:1}. We observe that the estimation error of the MN estimator remains nearly constant as the task number $t$ increases. Furthermore, a higher noise level makes the MN estimator perform worse than the other methods. This highlights the sensitivity of MN to noise.

\begin{figure}[ht]
\centering
\includegraphics[width=1\linewidth]{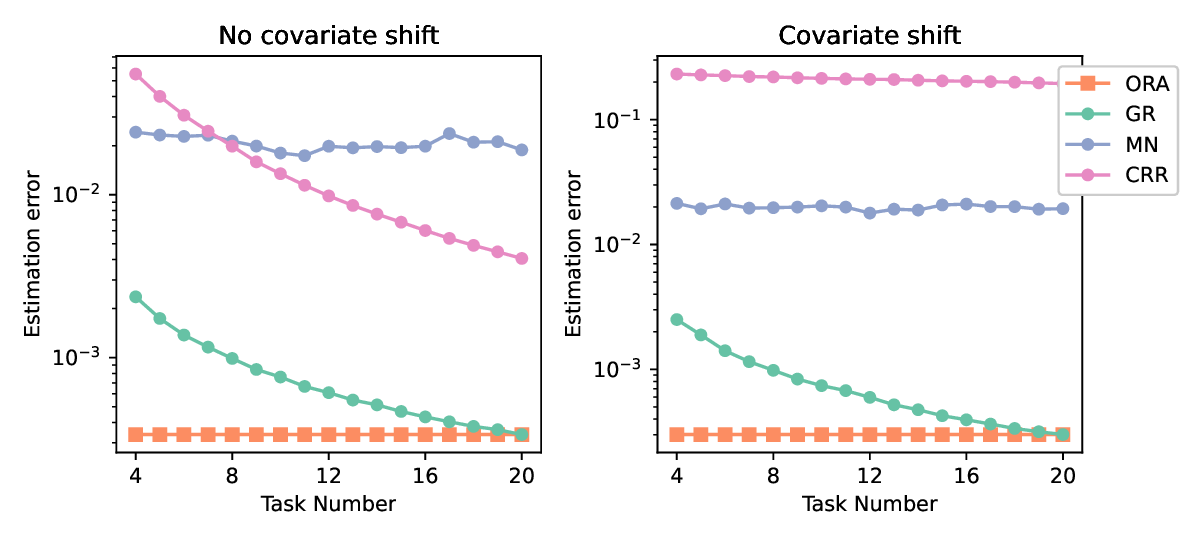}
\caption{Simulation results with and without covariate shift: $T=20$, $n_t=150$, $p=200$, and $\sigma^2=1$.}
\label{fig:2}
\end{figure}

The simulation results with and without covariate shift are contrasted in Figure \ref{fig:2}, from which we find that covariate shift makes the CRR estimator worse. In the absence of covariate shift, CRR exhibits a decreasing loss, even though it is inferior to the GR estimator. In the presence of covariate shift, the performance of CRR deteriorates significantly, and its estimation error remains approximately constant.

In each case, the GR estimator consistently demonstrates a decreasing estimation error, which eventually converges to the oracle estimator. It is noteworthy that, due to the random generating process for sampling the features, Assumption \ref{assump:commutable} does not hold for the empirical covariance matrices. Nevertheless, this departure does not adversely impact the performance of our method.

\section{Conclusion}
Our analysis focuses on regularization-based continual learning across a series of linear regression tasks. We establish the estimation error of the oracle estimator with access to all data concurrently. We then explore a set of generalized $\ell_2$-regularization algorithms characterized by matrix-valued hyperparameters. We develop an iterative formula to update the estimation error for these generalized $\ell_2$-regularized estimators when new tasks are introduced. This allows us to identify the hyperparameters that optimize the performance of the algorithm. Remarkably, selecting the optimal hyperparameters achieves a balanced trade-off between forward and backward knowledge transfer and accommodates the variability in data distribution. Furthermore, we explicitly derive the estimation error of the optimal algorithm, which is found to match the order for the oracle estimator. Finally, we show that early stopping and generalized $\ell_2$-regularization, rather than the conventional ridge regression, are equivalent in the context of continual learning, thereby addressing a question raised by \citet{evron2023continual} on the connection between early stopping and explicit regularization in continual learning.

\section*{Acknowledgments}
Xuyang Zhao and Wei Lin are supported by National Natural Science Foundation of China grants 12171012, 12292980, and 12292981. Weiran Huang is supported by 2023 CCF-Baidu Open Fund and Microsoft Research Asia. We thank three reviewers for valuable comments that have helped improve the paper.

\section*{Impact Statement}
This paper presents work whose goal is to advance the field of
Machine Learning. There are many potential societal consequences
of our work, none of which we feel must be specifically highlighted here.

\bibliography{example_paper}
\bibliographystyle{icml2024}

\newpage
\appendix
\onecolumn

\clearpage

\appendix
\begin{center}
\Large\bf Appendix
\end{center}
\section{Proofs for Section \ref{sec:lower}}

\begin{proof}[Proof of Theorem \ref{thm:min-norm}]
By the definition of minimum norm estimator, we have
$$
\boldsymbol{X}_t \boldsymbol{\wh{w}}^{(\mathrm{MN})}_t = \boldsymbol{y}_t
$$
for each $t\in[T]$.
Therefore, 
$$
\boldsymbol{X}_t(\boldsymbol{\wh{w}}^{\mathrm{MN}}_t - \boldsymbol{w}_*) = \boldsymbol{y}_t - \boldsymbol{X}_t \boldsymbol{w}_* = \boldsymbol{\varepsilon}_t,
$$
which implies
$$
(\wh{\vw}^{(\mathrm{MN})}_t - \vw_*)^\top \boldsymbol{\Sigma}_t (\wh{\vw}^{\mathrm{MN}}_t - \vw_*) = \frac{1}{n_t}\|\boldsymbol{X}_t(\wh{\vw}_t - \vw_*)\|^2 = \frac{1}{n_t}\|\boldsymbol{\varepsilon}_t\|^2.
$$
Taking expectation with respect to $\boldsymbol{\varepsilon}_t$ on both sides gives 
$$
E(\wh{\vw}^{(\mathrm{MN})}_t - \vw_*)^\top \boldsymbol{\Sigma}_t (\wh{\vw}^{(\mathrm{MN})}_t - \vw_*) = \boldsymbol{\sigma}^2.
$$
By the property of eigenvalues, we have
$$
E(\wh{\vw}_t^{(\mathrm{MN})} - \vw_*)^\top \boldsymbol{\Sigma}_t (\wh{\vw}_t^{(\mathrm{MN})} - \vw_*) \leq \max_{j\in[p]} \gamma_j^{(t)} E\|\wh{\vw}_t - \vw_*\|^2.
$$
Therefore, we finally conclude that
$$
E\|\wh{\vw}_t - \vw_*\|^2 \geq \frac{\sigma^2}{\max_{j\in[p]} \gamma_j^{(t)}}.
$$
\end{proof}

\begin{proof}[Proof of Theorem \ref{thm:ridge}]
Without loss of generality, we assume $\vw_{*,1}^2 = \vw_{*,1}^2 = 1$.
In this two-task problem, the definition of CRR estimator is 
$$
\wh{\vw}_1 = \argmin_{\vw} \left\{\frac{1}{n_1} \|\mX_1\vw - \vy_1\|^2 + \lambda_1 \|\vw\|^2\right\},
$$
$$
\wh{\vw}_2 = \argmin_{\vw} \left\{\frac{1}{n_2} \|\mX_2\vw - \vy_2\|^2 + \lambda_2 \|\vw-\wh{\vw}_1\|^2\right\},
$$
where $\lambda_1$ and $\lambda_2$ are the hyperparameters.
\paragraph{Task 1}
By taking derivatives, we can explicitly obtain the solution of $\wh{\vw}_1$:
$$
\wh{\vw}_1 = (\mSigma_1 + \lambda_1\mathbf{I}_2)^{-1}(\mX_1 \vy_1 / n_1).
$$
By the definition of $\mSigma_1$, we further have
$$
\wh{\vw}_{1,1} = \frac{1}{1+\lambda_1} \frac{\mX_{1,1}^\top\vy_1}{n_1}
$$
and
$$
\wh{\vw}_{1,2} = \frac{1}{\epsilon+\lambda_1} \frac{\mX_{1,2}^\top\vy_1}{n_1},
$$
where $\wh{\vw}_{1,j}$ is the $j$th coordinate of $\wh{\vw}$ and $\mX_{1,j}$ is the $j$th column of $\mX$. 
Therefore, using the definition of $\mSigma_1$ again we obtain
$$
\begin{aligned}
\E\, (\wh{\vw}_{1,1} - \vw_{*,1})^2 &= \E\,\left(\frac{1}{1+\lambda_1}\frac{\mX_{1,1}^\top (\mX_1 \vw_* + \boldsymbol{\varepsilon}_1)}{n_1} - \vw_{*,1}\right)^2 \\
&= \E\,\left(\frac{\vw_{*,1}}{1+\lambda_1} - \vw_{*,1} + \frac{1}{1+\lambda_1}\frac{\mX_{1,1}^\top \boldsymbol{\varepsilon}_1}{n_1}\right)^2 \\
&= \left(\frac{\lambda_1}{1+\lambda_1}\right)^2 + \left(\frac{1}{1+\lambda_1}\right)^2 \frac{\sigma^2}{n_1} \\
&\geq \frac{\sigma^2}{n_1 + \sigma^2},
\end{aligned}
$$
where the last equation holds if and only if $\lambda_1 = \sigma^2 / n_1$.
Similarly, for $\wh{\vw}_{1,2}$ we have
$$
\begin{aligned}
\E\, (\wh{\vw}_{1,2} - \vw_{*,2})^2 &= \E\,\left(\frac{1}{\epsilon+\lambda_1}\frac{\mX_{1,2}^\top (\mX_1 \vw_* + \boldsymbol{\varepsilon}_1)}{n_1} - \vw_{*,2}\right)^2 \\
&= \E\,\left(\frac{\epsilon}{\epsilon+\lambda_1}\vw_{*,2} - \vw_{*,2} + \frac{1}{\epsilon+\lambda_1}\frac{\mX_{1,2}^\top \boldsymbol{\varepsilon}_1}{n_1}\right)^2 \\
&= \left(\frac{\lambda_1}{\epsilon+\lambda_1}\right)^2 +\frac{\epsilon}{(\epsilon+\lambda_1)^2} \frac{\sigma^2}{n_1} \\
&\geq \frac{\sigma^2}{\epsilon n_1 + \sigma^2},
\end{aligned}
$$
where the last equation holds if and only if $\lambda_1 = \sigma^2/\epsilon n_1$.

\paragraph{Task 2} Through almost the same analysis, for $\wh{\vw}_2$ we have
$$
\begin{aligned}
\E\left[(\wh{\vw}_{2,1} - \vw_{*,1})^2|\wh{\vw}_1\right] &= \E\,\left[\frac{1}{\epsilon+\lambda_2}\left(\frac{\mX_{2,1}^\top (\mX_2 \vw_* + \boldsymbol{\varepsilon}_2)}{n_2} + \lambda_2 \wh{\vw}_{1,1}\right) - \vw_{*,1}\right]^2 \\
&= \left(\frac{\lambda_2}{\epsilon+\lambda_2}\right)^2(\wh{\vw}_{1,1} - \vw_{*,1})^2 +\frac{\epsilon}{(\epsilon+\lambda_2)^2} \frac{\sigma^2}{n_2}
\end{aligned}
$$
and 
$$
\begin{aligned}
\E\left[(\wh{\vw}_{2,2} - \vw_{*,2})^2|\wh{\vw}_1\right] &= \E\,\left[\frac{1}{1+\lambda_2}\left(\frac{\mX_{2,2}^\top (\mX_2 \vw_* + \boldsymbol{\varepsilon}_2)}{n_2} + \lambda_2 \wh{\vw}_{1,2}\right) - \vw_{*,2}\right]^2 \\
&= \left(\frac{\lambda_2}{1+\lambda_2}\right)^2(\wh{\vw}_{1,2} - \vw_{*,2})^2 +\frac{1}{(1+\lambda_2)^2} \frac{\sigma^2}{n_2}.
\end{aligned}
$$

From Theorem \ref{thm:adaptive-ridge}, we know that
$$
\E(\wh{\vw}^{\text{(GR)}}_{2,1} - \vw_{*,1})^2 = O\left(\frac{\sigma^2}{n_1 + \epsilon n_2}\right)
$$
and 
$$
\E(\wh{\vw}^{\text{(GR)}}_{2,2} - \vw_{*,2})^2 = O\left(\frac{\sigma^2}{\epsilon n_1 + n_2}\right).
$$

By some calculations, if 
$$
\frac{\E(\wh{\vw}^{\text{(CRR)}}_{2,1} - \vw_{*,1})^2}{\E(\wh{\vw}^{\text{(GR)}}_{2,1} - \vw_{*,1})^2} < \infty
$$
when $n_1, n_2 \to \infty$ and $\epsilon\to 0$, we need
$$
\epsilon\frac{1-\sqrt{\frac{\epsilon n_2}{n_1 + \epsilon n_2}}}{\sqrt{\frac{\epsilon n_2}{n_1 + \epsilon n_2}}}\lesssim \lambda_2 \lesssim \epsilon \frac{\sqrt{\frac{n_1}{n_1 + \epsilon n_2}}}{1-\sqrt{\frac{n_1}{n_1 + \epsilon n_2}}}.
$$
If 
$$
\frac{\E(\wh{\vw}^{\text{(CRR)}}_{2,2} - \vw_{*,2})^2}{\E(\wh{\vw}^{\text{(GR)}}_{2,2} - \vw_{*,2})^2} < \infty
$$
when $n_1, n_2 \to \infty$ and $\epsilon\to 0$, we need
$$
\sqrt{\frac{\epsilon n_1 + n_2}{n_2}} - 1 \lesssim \lambda_2 \lesssim \frac{\sqrt{\frac{1}{\epsilon n_1 + n_2}}}{1-\sqrt{\frac{1}{\epsilon n_1 + n_2}}}.
$$ 

We consider a special case, where $\epsilon n_2 / n_1 \to \infty$. Then the above requirements become
$$
\epsilon \frac{n_1}{n_1 + \epsilon n_2} \lesssim \lambda_2 \lesssim \epsilon \sqrt{\frac{n_1}{n_1 + \epsilon n_2}}
$$
and
$$
\frac{\epsilon n_1}{n_2} \lesssim \lambda_2 \lesssim \sqrt{\frac{1}{\epsilon n_1 + n_2}}.
$$
However, if $n_1=O(n^2)$, $n_2=O(n^3)$ and $\epsilon=O(n^{-0.5})$ for some $n\to\infty$, the lower bound of the first inequality if greater than the upper bound of the second inequality:
$$
\epsilon \frac{n_1}{n_1 + \epsilon n_2} = O(n^{-1})
$$
while 
$$
\sqrt{\frac{1}{\epsilon n_1 + n_2}} = O(n^{-1.5}).
$$

Therefore, by contradiction we have
$$
\sup_{n_1,n_2,\epsilon} \inf_{\lambda_2} \frac{\gL(\wh{\vw}_2^{(\text{CRR})})}{\gL(\wh{\vw}_2^{(\text{GR})})} = \infty.
$$
\end{proof}

\section{Proofs for Section \ref{sec:main}}
\begin{proof}[Proof of Lemma \ref{lem:ora}]
The oracle estimator $\wh{\vw}_T^{(\mathrm{ORA})}$ satisfies
$$
\sum_{t=1}^T X_t^\top(\boldsymbol{X}_t \wh{\vw}_T^{(\mathrm{ORA})} - \vy_t) =  0,
$$
which implies 
$$
\left(\sum_{t=1}^T \boldsymbol{X}_t^\top\boldsymbol{X}_t\right) \wh{\vw}_T^{(\mathrm{ORA})} = \sum_{t=1}^T \boldsymbol{X}_t^\top \vy_t.
$$
By Assumption \ref{assump:commutable} and \ref{assump:fullrank}, we have that $\sum_{t=1}^T \boldsymbol{X}_t^\top \boldsymbol{X}_t$ is invertible. Therefore, the ORA has the following explicit form solution
$$
\begin{aligned}
\wh{\vw}_T^{(\mathrm{ORA})} &= \left(\sum_{t=1}^T \boldsymbol{X}_t^\top \boldsymbol{X}_t\right)^{-1} \left(\sum_{t=1}^T \boldsymbol{X}_t^\top \vy_t\right) \\
&= \left(\sum_{t=1}^T \boldsymbol{X}_t^\top \boldsymbol{X}_t\right)^{-1} \left(\sum_{t=1}^T \boldsymbol{X}_t^\top (\boldsymbol{X}_t \vw_* + \boldsymbol{\varepsilon}_t)\right) \\
&= \vw_* + \left(\sum_{t=1}^T \boldsymbol{X}_t^\top \boldsymbol{X}_t\right)^{-1} \left(\sum_{t=1}^T \boldsymbol{X}_t^\top \boldsymbol{\varepsilon}_t\right).
\end{aligned}
$$
Taking expectation with respect to $\{\boldsymbol{\varepsilon}_t\}_{t=1}^T$, we obtain
$$
\begin{aligned}
\E\,\|\wh{\vw}_T^{(\mathrm{ora})} - \vw_*\|^2 &= E\left\|\left(\sum_{t=1}^T \boldsymbol{X}_t^\top \boldsymbol{X}_t\right)^{-1} \left(\sum_{t=1}^T \boldsymbol{X}_t^\top \boldsymbol{\varepsilon}_t\right)\right\|^2 \\
&= tr\left\{\left(\sum_{t=1}^T \boldsymbol{X}_t^\top \boldsymbol{X}_t\right)^{-1}\right\}.
\end{aligned}
$$
By Assumption \ref{assump:commutable}, we can further have
$$
\begin{aligned}
\tr\left\{\left(\sum_{t=1}^T \boldsymbol{X}_t^\top \boldsymbol{X}_t\right)^{-1}\right\} &= \tr\left\{\left(\sum_{t=1}^T n_t \boldsymbol{\Gamma}_t \right)^{-1}\right\} \\
&= \sum_{j=1}^p \frac{1}{n_1\gamma_j^{(1)} + \dots n_T \gamma_j^{(T)}},
\end{aligned}
$$
which completes the proof.

\end{proof}

\begin{proof}[Proof of Theorem \ref{thm:adaptive-ridge}]
For each $t=1,\dots,T$, the solution of GR estimator $\wh{\vw}_t^{(\mathrm{GR})}$ satisfies
$$
\frac{1}{n_t} \boldsymbol{X}_t^\top (\boldsymbol{X}_t \wh{\vw}_t^{(\mathrm{GR})}-\vy_t) + \boldsymbol{H}_t (\wh{\vw}_t^{(\mathrm{GR})}-\wh{\vw}_{t-1}^{(\mathrm{GR})}) = 0
$$
Since $\vy_t = \boldsymbol{X}_t \vw_* + \boldsymbol{\varepsilon}_t$, it can be written explicitly as
$$
\wh{\vw}_t^{(\mathrm{GR})} = \vw_* + (\boldsymbol{X}_t^\top \boldsymbol{X}_t + n \boldsymbol{H}_t)^{-1} \boldsymbol{X}_t^\top \boldsymbol{\varepsilon}_t + (\boldsymbol{X}_t^\top \boldsymbol{X}_t + n_t \boldsymbol{H}_t)^{-1} n_t \boldsymbol{H}_t (\wh{\vw}_{t-1}^{(\mathrm{GR})} - \vw_*).
$$
Therefore, for each $j=1,\dots,p$,
$$
\begin{aligned}
\vu_j^\top (\wh{\vw}_t^{(\mathrm{GR})}-\vw_*) &= \vu_j^\top (n_t\boldsymbol{\Sigma}_t + n_t \boldsymbol{H}_t)^{-1} \boldsymbol{X}_t^\top \boldsymbol{\varepsilon}_t + (n_t \boldsymbol{\Sigma}_t + n_t \boldsymbol{H}_t)^{-1} n_t \boldsymbol{\Lambda}_t \boldsymbol{U}^\top (\wh{\vw}_{t-1}^{(\mathrm{GR})} - \vw_*),
\end{aligned}
$$
which implies that

\begin{equation}\label{eq:main}
\begin{aligned}
\E\left[e_j^{(t)}\right] &= \E\left[\left\|\vu_j^\top(\wh{\vw}_t^{(\mathrm{GR})} - \vw_*)\right\|^2\right] \\
&\overset{(\mathrm{i})}{=} \E\left[(\vu_j^\top (\boldsymbol{X}_t^\top \boldsymbol{X}_t + n_t \boldsymbol{H}_t)^{-1} \boldsymbol{X}_t \boldsymbol{\varepsilon}_t)^2\right] + \E\,\left[(\vu_j^\top (\boldsymbol{X}_t^\top \boldsymbol{X}_t + n_t \boldsymbol{H}_t)^{-1} n_t \boldsymbol{H}_t (\wh{\vw}_{t-1}-\vw_*))^2\right] \\
&\overset{(\mathrm{ii})}{=} \E \left[(\vu_j^\top \mU(n_t\boldsymbol{\Gamma}_t + n_t\boldsymbol{\Lambda}_t)^{-1} \mU^\top \mX_t \mvarepsilon_t)^2\right] + n_t^2 \E\left[ (\vu_j^\top \mU(n_t \boldsymbol{\Gamma}_t + n_t\boldsymbol{\Lambda}_t)^{-1} \boldsymbol{\Lambda} \mU^\top (\wh{\vw}_{t-1}-\vw_*))^2\right] \\
&\overset{(\mathrm{iii})}{=} \frac{\gamma_j^{(t)} \sigma^2 /n + (\lambda_j^{(t)})^2 \E\left[e_j^{(t-1)}\right]}{(\lambda_j^{(t)} + \gamma_j^{(t)})^2} \\
&= \frac{\gamma_j^{(t)} \sigma^2 /n + (\lambda_j^{(t)} + \gamma_j^{(t)} - \gamma_j^{(t)})^2 \E\left[e_j^{(t-1)}\right]}{(\lambda_j^{(t)} + \gamma_j^{(t)})^2} \\
&= \E\left[e_j^{(t-1)}\right] - 2 \frac{\gamma_j^{(t)} \E\left[e_j^{(t-1)}\right]}{\lambda_j^{(t)} + \gamma_j^{(t)}} + \frac{(\gamma_j^{(t)})^2 \E\left[e_j^{(t-1)}\right] + \gamma_j^{(t)} \sigma^2/n}{(\lambda_j^{(t)} + \gamma_j^{(t)})^2},
\end{aligned}
\end{equation}
where (i) comes from the independence between $\mvarepsilon_t$ and $\wh{\vw}_{t-1}^{(\mathrm{GR})}$, (ii) comes from Assumption \ref{assump:commutable} and (iii) is obtained by the property of eigenvalues and eigenvectors. 
To derive the optimal value of $\lambda_j^{(t)}$
Now we consider two different cases: 
\begin{enumerate}
\item Consider the case $\gamma_j = 0.$ Then as long as $\lambda_j > 0$, $\boldsymbol{\Sigma}_t + \mH_t$ is invertible and we have
$$
\E\left[e_j^{(t)}\right] = \E\left[e_j^{(t-1)}\right].
$$
This means that data of task $t$ do not bring new information about the direction $j$ of $\vw_*$, which makes the $j$th projected error $e_j^{(t)}$ unchanged.

\item Consider the case $\gamma_j > 0$.
In this case, the last formula of Equation \ref{eq:main} can be regarded as a quadratic function of $1/(\lambda_j^{(t)}+\gamma_j^{(t)})$ as $\lambda_j^{(t)}$ changes.  
Therefore, the optimal $\lambda_j$ is obtained by 
$$
\frac{1}{\lambda_j^{(t)} + \gamma_j^{(t)}} = \frac{\E\left[e_j^{(t-1)}\right]}{\gamma_j^{(t)} \E\left[e_j^{(t-1)}\right] + \sigma^2/n_t},
$$
which is the minimum of the quadratic function. This further implies that $\lambda_j^{(t)} = \frac{\sigma^2/n_t}{\E(e_j^{(t-1)})^2}$,
where we have
\begin{equation}\label{eq:iteration}
\begin{aligned}
\E\left[e_j^{(t)}\right] &= \frac{\left(E\left[e_j^{(t-1)}\right]\right)^2 \gamma_j^2+ \E\left[e_j^{(t-1)}\right] \gamma_j \sigma^2 / n_t - \gamma_j^2 \left(\E\left[e_j^{(t-1)}\right]\right)^2}{\gamma_j^2 \left(\E\left[e_j^{(t-1)}\right]\right)^2 + \gamma_j \sigma^2 / n_t} \\
&= \frac{\E\left[e_j^{(t-1)}\right]\cdot \sigma^2 / (\gamma_j n_t)}{\E\left[e_j^{(t-1)}\right] + \sigma^2 / (\gamma_j n_t)} \\
&= \frac{1}{\left(\E\left[e_j^{(t-1)}\right]\right)^{-1} + \left(\sigma^2 /(\gamma_j n_t)\right)^{-1}}.
\end{aligned}
\end{equation}
\end{enumerate}

To prove the final results, we consider mathematical induction.
By Assumption \ref{assump:fullrank}, for each $j\in[p]$, there exists $\tau_j\in[T]$ such that $\tau_j>0$. Therefore, by the above derivation, $e_j^{(\tau_j)}$ satisfies
$$
\begin{aligned}
\E\left[e_j^{(\tau_j)}\right] &= \frac{1}{(e_j^{(0)})^{-1} + (\sigma^2/(\gamma_j^{(\tau_j)}n_{\tau_j}))^{-1}} \\ 
&= \frac{\sigma^2}{\sigma^2/e_j^{(0)} + \gamma_j^{(\tau_j)} n_{\tau_j}} \\
&= \frac{\sigma^2}{\sigma^2/e_j^{(0)} + \gamma_j^{(1)}n_1 + \dots + \gamma_j^{(\tau_j)} n_{\tau_j}}.
\end{aligned}
$$

For $t<\tau_j$, by Case (1) discussed above we have
$$
\E\left[e_j^{(t)}\right] = \frac{\sigma^2}{\sigma^2/e_j^{(0)} + \gamma_j^{(1)}n_1+\dots+\gamma_j^{(t-1)}n_{t}}
$$
since $\gamma_j^{(t)}=0$ for every $t<\tau_j.$

Now suppose
$$
\E\left[e_j^{t}\right] = \frac{\sigma^2}{\sigma^2/e_j^{(0)} + \gamma_j^{(1)}n_1+\dots+\gamma_j^{(t)}n_{t}}
$$
holds for some $t\geq\tau_j$. If $\gamma_j^{(t+1)}=0$, by Case (1) we have 
$$
\E\left[e_j^{(t+1)}\right] = \E e_j^{(t)} = \frac{\sigma^2}{\sigma^2/e_j^{(0)} + \gamma_j^{(1)} n_1 + \dots + \gamma_j^{(t+1)} n_{t+1}}
$$
since $\gamma_j^{(t+1)}=0$. If $\gamma_j^{(t+1)}>0$, by Case (2) we have
$$
\begin{aligned}
\E \left[e_j^{(t+1)}\right] &= \frac{1}{\left(E e_j^{(t)}\right)^{-1} + \left(\sigma^2/\gamma_j^{(t+1)} 
 n_{t+1}\right)^{-1}} \\
&= \frac{\sigma^2}{\sigma^2/e_j^{(0)} + \gamma_j^{(1)} n_1 + \dots + \gamma_j^{(t+1)} n_{t+1}}.
\end{aligned}
$$
Therefore, we conclude that for each $t\in[T]$, 
$$
\E [e_j^{(t)}] = \frac{\sigma^2}{\sigma^2/e_j^{(0)} + \gamma_j^{(1)} n_1 + \dots + \gamma_j^{(t)} n_{t}}
$$
holds if $\Lambda_t$ is chosen by 
$$
\begin{aligned}
\lambda_j^{(t)} &= \frac{\sigma^2/n_t}{E e_j^{(t-1)}} \\
&= \frac{\sigma^2/e_j^{(0)} + \gamma_j^{(1)}n_1+\dots +\gamma_j^{(t-1)}n_{t-1}}{n_t}\\
\end{aligned}.
$$
Finally, since $\|\wh{\vw}_t^{\mathrm{GR}})-\vw_*\|^2 = \sum_{j=1}^p e_j^{(t)}$, taking summation of all $e_j^{(t)}$ gives
$$
\gL(\wh{\vw}_t^{(\mathrm{GR})}) = \sum_{j=1}^p \frac{\sigma^2}{\sigma^2/e_j^{(0)} + \gamma_j^{(1)}n_1 + \dots + \gamma_j^{(t)}n_t}.
$$
\end{proof}

\begin{proof}[Proof of Theorem \ref{thm:approx}]
Recall from the proof of Theorem \ref{thm:adaptive-ridge} that
$$
\begin{aligned}
\E \left[e_j^{(t)}\right] &= \E (u_j^\top (X_t^\top X_t + n H_t)^{-1} X_t \varepsilon_t)^2 + \E (u_j^\top (X_t^\top X_t + n H_t)^{-1} n H_t (\wh{w}_{t-1}-w_*))^2 \\
&= \E (u_j^\top U(n\Gamma_t + n\Lambda_t)^{-1} U^\top X_t \varepsilon_t)^2 + n^2 \E (u_j^\top U(n\Gamma_t + n\Lambda_t)^{-1} \Lambda U^\top (\wh{w}_{t-1}-w_*))^2 \\
&= \frac{\gamma_j \sigma^2 /n + \lambda_j^2 \E e_j^{(t-1)}}{(\lambda_j + \gamma_j)^2} \\
&= \frac{\gamma_j \sigma^2 /n + (\lambda_j + \gamma_j - \gamma_j)^2 \E e_j^{(t-1)}}{(\lambda_j + \gamma_j)^2} \\
&= \E e_j^{(t-1)} - 2 \frac{\gamma_j \E e_j^{(t-1)}}{\lambda_j + \gamma_j} + \frac{\gamma_j^2 \E e_j^{(t-1)} + \gamma_j \sigma^2/n}{(\lambda_j + \gamma_j)^2}.
\end{aligned}
$$
The optimal $\lambda_j$ that minimize the above equation satisfies
$$
\frac{1}{\lambda_j + \gamma_j} = \frac{\E e_j^{(t-1)}}{\gamma_j \E e_j^{(t-1)} + \sigma^2/n_t},
$$
namely
$$
\lambda_j = \frac{\sigma^2/n_t}{\E e_j^{(t-1)}}.
$$
Now suppose we use its approximated version instead:
$$
\frac{1}{\wt{\lambda}_j+\gamma_j} = \frac{1}{{\lambda}_j+\gamma_j} + \Delta
$$
for some $\wt{\lambda}_j$.
Then we have
$$
\begin{aligned}
\E e_j^{(t)} &\leq \frac{\E e_j^{(t-1)} \cdot \sigma^2/(\gamma_j n_t)}{\E e_j^{(t-1)} + \sigma^2/(\gamma_j n_t)} + (\gamma_j^2 \E e_j^{(t-1)} + \gamma_j \sigma^2/n )\Delta^2.
\end{aligned}
$$

Suppose that 
$$
\E e_j^{(t-1)} \leq \frac{C\sigma^2}{e_j^{(0)} + \gamma_j^{(1)} n_1 + \dots \gamma_j^{(t-1)} n_{t-1}}.
$$
If we want
$$
\E e_j^{(t)} \leq \frac{C\sigma^2}{e_j^{(0)} + \gamma_j^{(1)} n_1 + \dots \gamma_j^{(t)} n_t},
$$
holds true, we only need to make sure
$$
\begin{aligned}
&\frac{\E e_j^{t-1} \cdot \sigma^2/(\gamma_j n_t)}{\E e_j^{t-1} + \sigma^2/(\gamma_j n_t)} + (\gamma_j^2 \E e_j^{(t-1)} + \gamma_j \sigma^2/n )\Delta^2 \\
\leq & \frac{C\sigma^2}{e_j^{(0)} + \gamma_j^{(1)} n_1 + \dots + \gamma_j^{(t-1)} n_{t-1} + C \gamma_j^{(t)} n_t} + \gamma_j^2 \Delta^2 \left( \frac{C\sigma^2}{e_j^{0} + \gamma_j^1 n_1 + \dots \gamma_j^{t-1} n_{t-1}} + \frac{\sigma^2}{\gamma_j^t n_t} \right)
\\
\leq & \frac{C\sigma^2}{e_j^{(0)} + \gamma_j^{(1)} n_1 + \dots + \gamma_j^{(t)} n_t}.
\end{aligned}
$$
Define 
$$
\rho_j^{(t)} := \frac{\gamma_j^{(t)} n_t}{e_j^{(0)} + \gamma_j^{(1)} n_1 + \dots + \gamma_j^{(t)} n_t},
$$
then the above inequality becomes
$$
\begin{aligned}
\frac{C}{1+C \rho_j^{(t)}} + (\gamma_j^{(t)})^2 \Delta^2 \left(C + \frac{1}{\rho_j^{(t)}}\right) \leq \frac{C}{1+\rho_j^{(t)}},
\end{aligned}
$$
which is indeed
$$
(\gamma_j^{(t)})^2\Delta^2 \leq \frac{C(C-1)(\rho_j^{(t)})^2}{(1+\rho_j^{(t)})(1+C\rho_j^{(t)})^2}.   
$$
\end{proof}

\section{Proofs for Section \ref{sec:earlystopping}}
\begin{proof}[Proof of Theorem \ref{thm:earlystopping}]
For each $t\in[T]$ and $\tau\in[m_t]$, using the update iteration of MN estimator we have
$$
\begin{aligned}
\vw_t^{(\tau)} - \vw_* &= (\mI_p - \mA_t \mX_t^\top \mX_t/n_t) \vw_t^{(\tau-1)} + (\mA_t/n_t) \mX_t^\top (\mX_t \vw_* + \mvarepsilon_t) - \vw_* \\
&= (\mI_p - \mA_t \mX_t^\top \mX_t/n_t) (\vw_t^{(\tau-1)} - \vw_*) + \mA_t \mX_t^\top \mvarepsilon_t/n_t \\
&= (\mI_p - \mA_t \mX_t^\top \mX_t/n_t)^{\tau}(\vw_{t-1}^{(\mathrm{ES})}-\vw_*) + (\mI - (\mI-\mA_t \mX_t^\top \mX_t/n_t)^\tau)(\mA_t \mX_t^\top \mX_t/n_t)^{-1} \frac{\mA_t}{n_t} \mX_t^\top \mvarepsilon_t \\
&= \mU (\mI_p - \mS_t \boldsymbol{\Gamma}_t)^{\tau} \mU^\top (\vw_{t-1}^{(\mathrm{ES})}-\vw_*) + \mU(\mI_p - (\mI_p-\mS_t \boldsymbol{\Gamma}_t)^\tau )\boldsymbol{\Gamma}_t^{-1} \mU^\top \frac{\mX_t^\top}{n_t} \mvarepsilon_t.
\end{aligned}
$$
Therefore, for each $j=1,\dots,p$, we have
$$
\begin{aligned}
\vu_j^\top (\vw_t^{(\tau)}-\vw_*) = (1-s_j \gamma_j)^\tau \vu_j^\top (\vw_{t-1}^{(\mathrm{ES})}-\vw_*) + (1-(1-s_j \gamma_j)^\tau) \vu_j^\top \frac{\mX_t^\top}{\gamma_j n}\mvarepsilon_j.
\end{aligned}
$$
Note that By the proof of Theorem \ref{thm:adaptive-ridge}, the solution of generalized $\ell_2$ regularization estimator satisfies
$$
\vu_j^\top (\wh{\vw}_t^{(\mathrm{GR})} - \vw_*) = (\gamma_j + \lambda_j)^{-1} \lambda_j \vu_j^\top (\wh{\vw}_{t-1}^{(\mathrm{GR})} - \vw_*) + (\gamma_j + \lambda_j)^{-1} \gamma_j \vu_j^\top \frac{\mX_t^\top}{\gamma_j n} \mvarepsilon_t.
$$
Therefore, if $\lambda_j$ and $s_j$ satisfy
$$
(1-s_j\gamma_j)^{m_t} = \frac{\lambda_j}{\gamma_j + \lambda_j},
$$
namely
$$
s_j = \frac{1-(\lambda_j/(\gamma_j + \lambda_j))^{1/m_t}}{\gamma_j}
$$
or 
$$
\lambda_j = \frac{\gamma_j (1-s_j \gamma_j)^{m_t}}{1-(1-s_j \gamma_j)^{m_t}},
$$
the early stopping and $\ell_2$ regularization output the same estimator, i.e., 
$$
\vw_t^{(\mathrm{ES})} = \vw_t^{(m_t)} = \wh{\vw}_t^{(\mathrm{GR})}.
$$
\end{proof}

\begin{proof}[Proof of Corollary \ref{coro:es}]
By Theorem \ref{thm:adaptive-ridge} and Theorem \ref{thm:earlystopping}, if $s_j^{(t)}$ and $m_t$ satisfy 
$$
\left(1-s_j^{(t)} \gamma_j\right)^{m_t} = \frac{\sigma^2/(\gamma_j n)}{\E\left[e_j^{(t-1)}\right] + \sigma^2/(\gamma_j n)},
$$
the ES estimator $\wh{\vw}_t^{(\mathrm{ES})}$ equals the optimal generalized $\ell_2$ regularization estimator defined in Theorem \ref{thm:adaptive-ridge}.
In this case, its estimation error satisfies
$$
\gL(\wh{\vw}_t^{(\mathrm{ES})}) = \sum_{j=1}^p \frac{\sigma^2}{\sigma^2/e_j^{(0)} + \gamma_j^{(1)}n_1 + \dots + \gamma_j^{(t)}n_t}.
$$
\end{proof}

\section{Proof for Section \ref{sec:extension}}
\begin{proof}[Proof of Theorem \ref{thm:extension}]
For simplicity, we omit the superscript of the GR estimator in this proof.
Let $\mU_t\boldsymbol{\Gamma}_t \mU_t^\top$ be the eigendecomposition of $\boldsymbol{\Sigma}_t$ and define $e_{j,t_1}^{(t_2)}:=((\vu_j^{(t_1)})^\top (\wh{\vw}_{t_2} - \vw_*))^2$ as the projected error of $\vw_{t_2}$ onto the $j$th eigenvector of $\vw_{t_1}$. 
Note that if Assumption \ref{assump:commutable} holds, $\vu_j^{(t_1)}=\vu_j^{(t_2)}$ for every $t_1,t_1\in[T]$ and $e_{j,t_1}^{(t_2)}$ equals to $e_j^{(t_2)}$ defined in Section \ref{sec:main} for each $t_1$.

By the same derivation of \eqref{eq:iteration}, we directly have
$$
\E\left[e_{j,t}^{(t)}\right] = \frac{1}{\left(\E\left[ e_{j,t}^{(t-1)}\right]\right)^{-1} + (\sigma^2/(\gamma_j n_t))^{-1}} \leq \E\left[e_{j,t}^{(t-1)}\right].
$$
Therefore, summing them up with respect to $j$ gives
$$
\gL(\wh{\vw}_t) \leq \gL(\wh{\vw}_{t-1}).
$$
Obviously, the inequality holds strictly as long as $\sum_{j=1}^p \gamma_j^{(t)}>0$, i.e., there exists one $j$ such that $\gamma_j>0$.

We remark that if $e_{j,t}^{(t-1)}=e_{j,t-1}^{(t-1)}$ for every $t$, we can use mathematical induction to derive the estimation error in Theorem \ref{thm:adaptive-ridge}.
However, without Assumption \ref{assump:commutable}, we cannot ensure this equation holds true.

\end{proof}

\end{document}